\def\eqref#1{equation~\ref{#1}}
\def\1{\bm{1}}
\DeclareMathAlphabet{\mathsfit}{\encodingdefault}{\sfdefault}{m}{sl}
\SetMathAlphabet{\mathsfit}{bold}{\encodingdefault}{\sfdefault}{bx}{n}
\definecolor{kleinblue}{RGB}{0, 47, 167} 
\definecolor{kleinred}{HTML}{bc1919}
\title{DISCO: Diversifying Sample Condensation
for Efficient Model Evaluation \vspace{1em}}
\author{%
\parbox{\linewidth}{\centering\normalfont
Alexander Rubinstein$^{1}$ \qquad
Benjamin Raible$^{1}$ \qquad
Martin Gubri$^{2}$ \qquad
Seong Joon Oh$^{1}$\\[0.15cm]
$^{1}$T\"ubingen AI Center, University of T\"ubingen\\
$^{2}$Parameter Lab\\[0.25cm]
{\normalsize
\raisebox{-1pt}{\faGlobe}~\href{https://arubique.github.io/disco-site/}{Project Page}
\quad
\raisebox{-1pt}{\faGithub}~\href{https://github.com/arubique/disco-public}{DISCO Codebase}
}%
}%
} %
\newcommand{\method}{{DISCO}\xspace}
\newcommand{\methodbf}{{\textbf{DISCO}}\xspace}
\newcommand{\methodfull}{{{Diversifying Sample Condensation}}\xspace}
\newcommand{\methodfullbf}{{\textbf{Diversifying Sample Condensation}}\xspace}
\newcommand{\TV}{\mathrm{TV}}
\newcommand{\JSD}{\mathrm{JSD}}
\newtheorem{lemma}{Lemma}
\newtheorem{remark}{Remark}
\newtheorem{assumption}{Assumption}
\newtheorem{proposition}{Proposition}
\newtheorem{definition}{Definition}
\begin{document}

\maketitle

\begin{abstract}
Evaluating modern machine learning models has become prohibitively expensive. Benchmarks such as LMMs-Eval and HELM demand thousands of GPU hours per model. Costly evaluation reduces inclusivity, slows the cycle of innovation, and worsens environmental impact.
To address the growing cost of standard evaluation, new methods focused on efficient evaluation have started to appear. The typical approach follows two steps. First, select an anchor subset of data. Second, train a mapping from the accuracy on this subset to the final test result. The drawback is that anchor selection depends on clustering, which can be complex and sensitive to design choices. We argue that promoting diversity among samples is not essential; what matters is to select samples that \textit{maximise diversity in model responses}. 
Our method, \methodfullbf (\methodbf), selects the top-k samples with the greatest model disagreements. This uses greedy, sample-wise statistics rather than global clustering. The approach is conceptually simpler. From a theoretical view, inter-model disagreement provides an information-theoretically optimal rule for such greedy selection. \methodbf shows empirical gains over prior methods, achieving state-of-the-art results in performance prediction across MMLU, Hellaswag, Winogrande, and ARC.

\end{abstract}

\section{Introduction}
\label{sec:introduction}

\begin{wrapfigure}{r}{.5\linewidth}
\vspace{-2em}
    \centering
    \includegraphics[width=\linewidth]{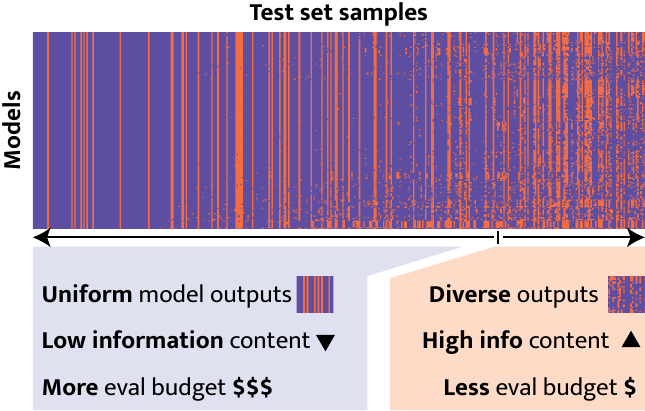}%
    \vspace{-.2em}
    \captionsetup{font=small}
    \caption{\textbf{Imbalance}. More evaluation budget is spent on less informative samples in test sets.}
    \label{fig:teaser}
\vspace{-1em}
\end{wrapfigure}

Model evaluation is becoming increasingly costly. 
Models have grown in size, which makes each inference expensive. 
Recent scaling of test-time computation has further raised the cost per task.
End-user requirements have also broadened, covering both the content of the output and its style \citep{wang2018glue,helm,kim2023prometheus,zhang2024lmmsevalrealitycheckevaluation}.
As a result, evaluation on modern benchmarks often requires hundreds to thousands of GPU hours.
For instance, LMMs-Eval can take between 30 and 1400 hours on 8$\times$A100 GPUs \citep{zhang2024lmmsevalrealitycheckevaluation}. HELM requires more than 4000 GPU hours \citep{helm}.

Several efficient evaluation approaches have emerged. 
A common framework works in two parts: subset selection and performance prediction. 
The first part selects a static subset of anchor points from the evaluation dataset. 
The second part predicts full benchmark performance by extrapolating from accuracy on this subset.
To select anchor points, existing methods often rely on clustering.
Samples are grouped by the similarity of responses they induce in a set of reference models \citep{vivek2023anchor,tinybenchmarks}. 
Variants of this framework include dynamic anchor selection \citep{hofmann2025fluid}, modified prediction models \citep{kipnismetabench}, and new benchmarks for method comparison \citep{zhang2025benchmark}.

We seek to improve both parts of this framework. 
For subset selection, we argue that diversity among samples is not essential. 
What matters is \textbf{diversity in model responses}.
We prove that inter-model disagreement is the most informative signal for estimating benchmark performance when the goal is to differentiate and rank models (Proposition \ref{thm:mutual_jsd_main}).
Evaluation should therefore focus on samples that elicit varied responses (Figure \ref{fig:teaser}).
For performance prediction, we argue that existing methods add unnecessary complexity by estimating hidden model parameters before predicting test performance \citep{tinybenchmarks, kipnismetabench}. 
We instead propose a direct route. 
Model signatures, defined as the concatenation of outputs on the selected subset, serve as inputs to simple predictors of benchmark performance. 
This framework is simpler, yet matches and surpasses more complex alternatives.

We validate these ideas through \methodfullbf (\methodbf). 
\method selects a small, informative subset of evaluation samples by focusing on model disagreement. 
Disagreement is measured by predictive diversity scoring (PDS, \citet{rubinstein2024scalable}), originally proposed for out-of-distribution detection.
A simple metamodel then predicts benchmark performance directly from the model signatures on this subset.
We evaluate \method in both language and vision domains. 
On MMLU, for example, \method reduces evaluation cost by 99.3\% (see Appendix~\ref{sec:cost_comparison}) with only 1.07 percentage points of error.
Compared with prior methods such as Anchor Points \citep{vivek2023anchor}, TinyBenchmarks \citep{tinybenchmarks}, and Metabench \citep{kipnismetabench}, \method achieves a stronger efficiency–precision trade-off.

\section{Related work}
\label{sec:related_work}

We review prior work relevant to our approach. We first highlight the escalating cost of evaluation for contemporary large models and motivate the need for efficiency. We then survey prior attempts at efficient benchmarking, covering instance and task reduction techniques.
Finally, we describe our novelty and contributions.

\textbf{Cost of evaluation.}
The evaluation of modern large models is currently driven by increasingly sophisticated benchmarks assessing a wide array of capabilities, from the foundational GLUE \citep{wang2018glue} and the comprehensive HELM \citep{helm} to LMMs-Eval for multimodal models \citep{zhang2024lmmsevalrealitycheckevaluation}, the diverse BIG-bench \citep{srivastava2022beyond}, Prometheus for measuring diverse LLM capabilities \citep{kim2023prometheus}, and GAIA for general AI assistants \citep{mialon2023gaia}.
This progress comes at an escalating cost: models have grown significantly in size, making each inference step more resource-intensive, while the scaling of test-time computations has dramatically increased the per-task evaluation costs. Furthermore, end-user requirements have diversified to encompass not only output content but also style and manner.
Consequently, a single evaluation on modern benchmarks can demand hundreds to thousands of GPU hours. For example, LMMs-Eval can require between 30 and 1400 hours on 8$\times$A100 GPUs per model \citep{zhang2024lmmsevalrealitycheckevaluation,tinybenchmarks}, and HELM evaluations can exceed 4000 GPU hours per model \citep{helm,tinybenchmarks}.

\textbf{Label-efficient evaluation.}
In the pre-LLM context, labelling a test set used to be a cost bottleneck for evaluation. In this context, the concept of ``active testing'' has been explored, where labelling budget is maximally assigned to information-rich samples \citep{majumdar2017random,ji2021active, deng2021labels,Kossen2021ActiveTSA,Hu2023HowMVA,kossen2022active,huang2024active,fogliato2024framework}. In our case, we are concerned with the \textit{inference costs} of evaluation. As such, active testing approaches are not directly applicable, as they require a full inference over the test set to identify informative samples to label.

\textbf{Efficient benchmarking.}
In the LLM era, benchmarks have diversified to measure multiple capabilities and styles of model behaviours.
Researchers have proposed strategies to build an efficient benchmark in the first place \citep{perlitz2023efficient,radsch2025bridging}.
There were attempts to compress multiple benchmarks, measuring an array of capabilities of LLMs, into a single one by eliminating redundancies \citep{kipnismetabench,zhao2024bento,yuan2025beyond}.
Others have focused on selection of small, informative subsets, also known as ``Anchor point'' approaches \citep{vivek2023anchor, tinybenchmarks, li2025active, gupta-etal-2025-improving}.
Given an entire dataset, they compute a small subset of data points according to the \textit{representativeness} criterion, determined through the correctness patterns of a large number of \textit{source models}.
Subsequently, target model performance is estimated based on weighted accuracy computed on the selected subset.
In particular, tinyBenchmarks \citep{tinybenchmarks} have adopted Item Response Theory (IRT) \citep{lord2008statistical} to estimate model performance in a principled manner. \cite{hofmann2025fluid} proposed an IRT-based approach to LLM evaluation that selects anchor points dynamically for each model, guided by its predictions on previously chosen anchors. To address the growing number of methods for efficient LLM evaluation, \cite{zhang2025benchmark} recently introduced a large-scale benchmark.
In this work, we adopt approaches from the black-box model analysis techniques, explained below.

\textbf{Our novelty and contribution.}
We differentiate our approach, \methodfull (\method), from previous work in two aspects.
(1) \textit{model disagreement} \citep{rubinstein2024scalable} is a simpler and more effective proxy for sample informativeness than \textit{representativeness} \citep{vivek2023anchor, tinybenchmarks}.
(2) The application of metamodels on model signatures is a simpler and more effective approach than direct accuracy evaluation approaches \citep{vivek2023anchor} or prior approaches that require estimating latent model parameters~\citep{tinybenchmarks, kipnismetabench}.

\begin{figure}
    \centering
    \includegraphics[width=0.9\linewidth]{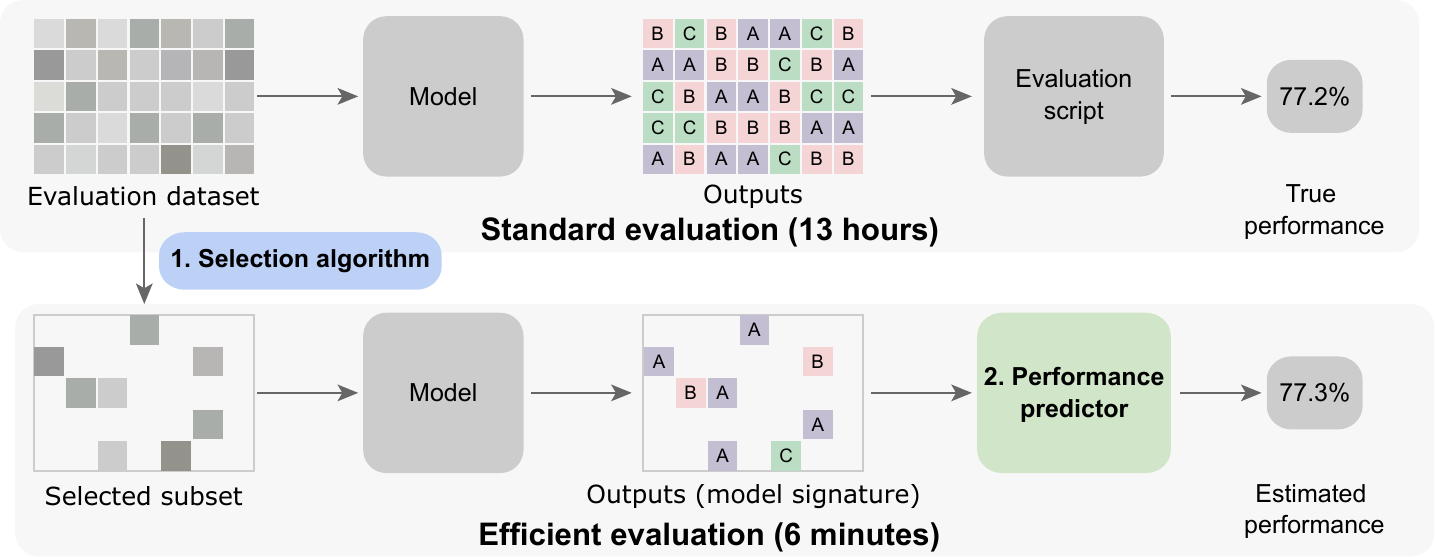}
    \caption{\textbf{Problem overview}. We aim at selecting a much smaller evaluation dataset than the original evaluation dataset, while keeping the estimated performances as close as possible. Figure~\ref{fig:dataset-condensation} details the selection algorithm and the performance predictor.}
    \label{fig:problem-overview}
\end{figure}

\section{Problem}
\label{sec:problem}

Our task is the estimation of {model performance} on a benchmark. Let $f:\mathcal{X}\rightarrow \mathcal{Y}$ be a predictive model over a dataset $\mathcal{D}:=\{(x_1,y_1),\ldots,(x_N,y_N)\}$ sampled iid from some distribution. 
We are interested in estimating the model performance on the dataset $S^f_{\mathcal{D}}$.
An example metric for model performance is accuracy: for a probabilistic classifier $f:\mathcal{X}\rightarrow[0,1]^C$, accuracy is defined as $\frac{1}{N}\sum_i \mathbf{1}\{\arg\max_c f_c(x_i) = y_i\}$.

We are interested in estimating $S^f_{\mathcal{D}}$ in a cost-effective way. We seek ways to sample a subset of size $K\ll N$ from the original set $\mathcal{D}$ to estimate $S^f_{\mathcal{D}}$. The overall problem is described in Figure \ref{fig:problem-overview}.
An integral ingredient for both prior work and ours is the set of  \textit{source models} $\mathcal{F} = \{f^1, \ldots, f^M\}$, a held-out set of models whose ground-truth performances are known. 
We define the \textit{target models} $\mathcal{\tilde{F}} = \{\tilde{f}^1, \ldots, \tilde{f}^{M_{\text{target}}}\}$ as the models whose performances we aim to estimate.

\begin{figure}
    \centering
    \includegraphics[width=0.9\linewidth]{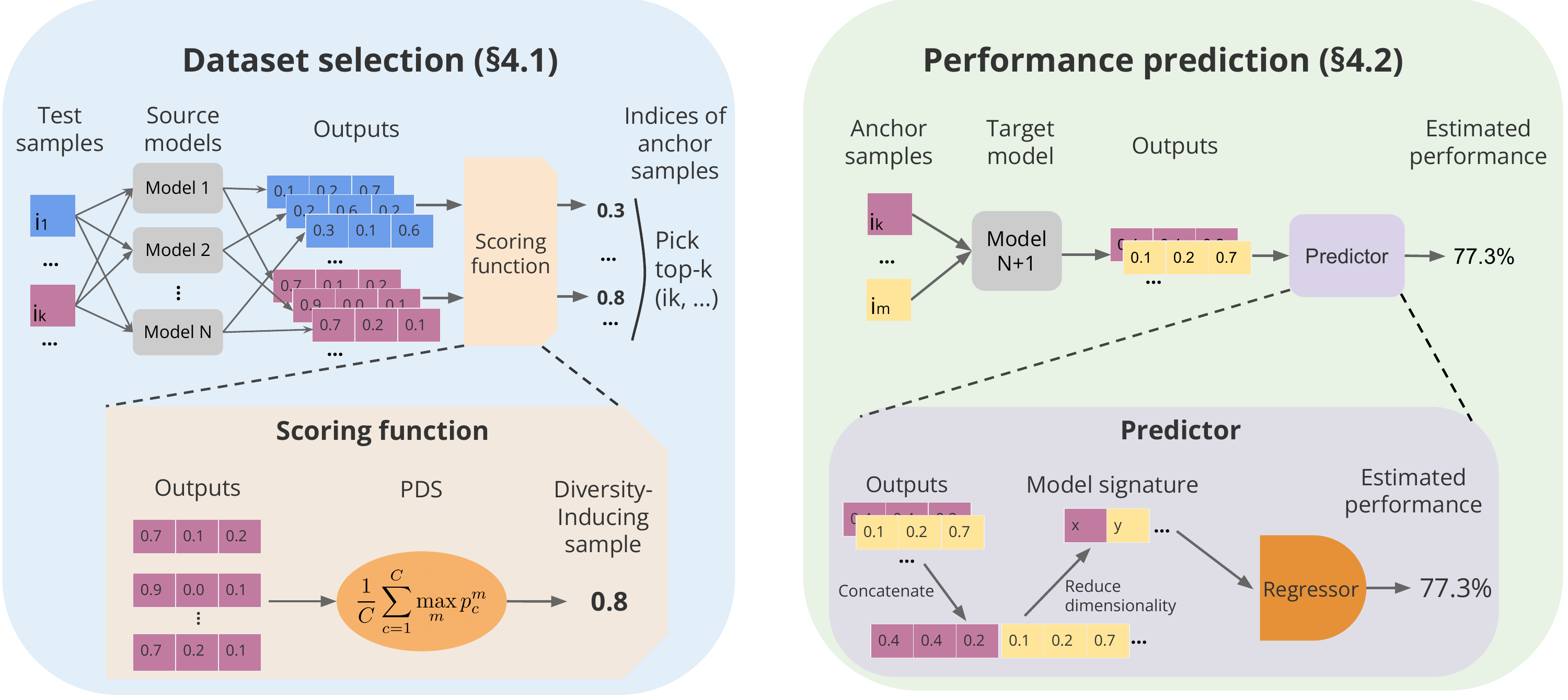}
    \caption{\textbf{\method overview}. First, we select a subset of an evaluation dataset with the most informative samples. Second, we predict the performance of unseen models from their outputs on the selected samples.}
    \label{fig:dataset-condensation}
    \vspace{-1em}
\end{figure}

\section{Solution}
\label{sec:solution}

This section presents \method, our solution to the problem of efficient performance evaluation. \method is composed of two steps: \textit{(i)} the dataset selection, where given an original dataset and an held-out set of source models, we identify a much smaller subset of samples; and \textit{(ii)} the performance prediction, where given the model outputs on our \method selected evaluation set, we estimate the model performance on the original set.

\subsection{Dataset selection}
\label{subsec:condensation}

At this stage, we require a score that quantifies each sample's informativeness for predicting performance on the full dataset. Using this score, we rank the samples and select a top-k subset that best preserves the dataset's information content.

\subsubsection{Prior selection methods}
\label{subsec:prior_selection}

We first review existing approaches for selecting representative data points in the evaluation set, referred to as anchor points.

\noindent

\textit{Anchor-conf} \cite{vivek2023anchor} choose $K$ anchors ${A}=\{a_k\}_1^K\subset\{x_i\}_1^N$ that minimise the sum of distances between each data point and the closest anchor: 
$
\min _{A} \sum_{i,k} d\left(e(x_i),e(a_k)\right),
$
where the $e(x)$ abbreviates the concatenated model likelihoods $e(x,y):=\left[f^1(x)_y,\ldots,f^M(x)_y\right]$ for the input and ground-truth label $(x,y)$ for the source models $\mathcal{F} = \{f^1, \ldots, f^M\}$.

\noindent
\textit{Anchor-corr} \citep{tinybenchmarks}
is nearly identical to \textit{Anchor-conf}, except that the embedding uses correctness scores instead of likelihoods:
$e(x,y):=\{s^1(x,y),\ldots,s^M(x,y)\}$, where $s^m(x,y):=1(\arg\max_c f^m(x)_c=y)$ encodes correctness of model $f^m$ on sample $x$.

\noindent
\textit{Anchor-IRT} \citep{tinybenchmarks}
uses the Item-Response Theory (IRT) to define a parametric model $\Pr\left(s^m_i = 1 \mid \theta^m, \alpha_i, \beta_i\right)=\operatorname{sigmoid}(-\alpha_i^{\top} \theta^m+\beta_i)$. It predicts the correctness of a model $f^m$ on a sample $x_i$ with parameters $\theta^m\in\mathbb{R}^d$, $\alpha_i\in\mathbb{R}^d$, and $\beta_i \in \mathbb{R}$. Using observations of the sample-wise correctness of source models $(x_i,y_i,s_i^m)$, the parameters are inferred with an Expectation-Maximisation algorithm.
Now, they continue the anchor selection based on the sample-wise embeddings $e(x_i):=(\alpha_i, \beta_i)$.

\noindent \textit{Best for validation} \cite{kipnismetabench}
finds an anchor set $A$ through an iterative search. The algorithm first generates a large number of candidate anchor sets, $\{A_1, \dots, A_P\}$, by uniformly sampling from the full dataset $\mathcal{D}$. For each candidate set $A_p$, a simple scalar-to-scalar regression model, $g_p$, is trained on the source models $\mathcal{F}$. This model learns to map the performance on the subset, $S^f_{A_p}$, to the known ground-truth performance on the full dataset, $S^f_{\mathcal{D}}$. Each trained regressor $g_p$ is subsequently evaluated on a held-out validation set of models. The final anchor set $A$ is selected as the candidate $A_p$ whose corresponding regressor $g_p$ yields the lowest prediction error (e.g., RMSE) on this validation set.

\textbf{How \methodbf differs.} Unlike clustering, we use sample-wise statistics to determine samples with maximal information content. This greatly simplifies the sampling procedure. 
We exploit the \textbf{model diversity}, not \textbf{model confidence} or \textbf{correctness}. A set of models can be highly confident and diverse at the same time. We argue that inputs that induce model diversity are more useful for performance prediction.

\subsubsection{\method selection}

\label{subsubsec:DISCO_selection}

We now present our selection method.
In this part, we explain how we identify such samples in the test dataset. 
Our sample selection strategies are illustrated in Figure~\ref{fig:dataset-condensation}. The main approach in \textbf{\methodfull\ (\method)} is to select a subset $\mathcal{D}_\text{\method}$ of the original evaluation set $\mathcal{D}$ by sampling the top-k samples based on disagreement score, such as PDS. This follows the intuition shown in Figure~\ref{fig:teaser}.

We start with an information-theoretic observation below.

\begin{proposition}
\label{thm:mutual_jsd_main}
Let $\mathcal{D} = \{(x_i,y_i)\}_i^{N}$ be a test set and $m \sim \mathrm{Unif}\{1,\dots,M\}$~(\ref{ass:uniform}) be the index of a uniformly chosen model. Let $f_{c}^m(x_i)\in[0,1]$ be the predictive probability for class $c$ of model $f^m$ on input $x_i$.
We write $\widehat{y}_i^m$ for the categorical random variable following $\operatorname{Cat}(f_1^m(x_i),\ldots,f_C^m(x_i))$.
Define ensemble mean prediction to be $\widebar{f}_c(x_i):=\mathbb{E}_m[f^m_c(x_i)]$ for each class $c$ and define corresponding prediction random variable as $\widehat{y}_i$ following $\operatorname{Cat}(\widebar{f}_1(x_i),\ldots,\widebar{f}_C(x_i))$.
Let $S(m)=S(f^m,\mathcal{D})$ denote a function of model $m$ and dataset $\mathcal{D}$, such as model accuracy, that is injective with respect to $m$. 
Assume that the only randomness in $\widehat{y}^m$ comes from $m$~(\ref{ass:deterministic}).
Then,
\[
\mathrm{MI}_{m,\widehat{y}_i}\left(S(m) ; \widehat{y}_i\right)\;=\;{H}(\widehat{y}_i)-\mathbb{E}_{m}\left[{H}\left(\widehat{y}^m_i \right)\right] \;=\; \mathrm{JSD}\left(\widehat{y}_i^1,\ldots,\widehat{y}_i^M\right).
\]
where ${H}(\cdot)$ is entropy, $\mathrm{MI}(\cdot)$ is mutual information, and $\operatorname{JSD}(\cdot)$ is generalised Jensen-Shannon Divergence for multiple distributions \citep{1365067}.

See proof in Appendix~\ref{sec:jsd_mi}.
\end{proposition}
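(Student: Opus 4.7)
The plan is to unpack the definitions and reduce the claim to the standard identity that, for a mixture $\overline{P}=\frac{1}{M}\sum_m P_m$, the generalised Jensen–Shannon divergence satisfies $\mathrm{JSD}(P_1,\ldots,P_M)=H(\overline{P})-\frac{1}{M}\sum_m H(P_m)$. In our setting the three quantities in the claim are the two halves of a single mutual-information decomposition, so the proof is essentially a chain of equalities rather than an estimate.

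First I would dispense with $S(m)$. Because $S$ is injective in $m$, the $\sigma$-algebra generated by $S(m)$ coincides with that generated by $m$, hence $\mathrm{MI}(S(m);\widehat{y}_i)=\mathrm{MI}(m;\widehat{y}_i)$. Next I would use the standard decomposition $\mathrm{MI}(m;\widehat{y}_i)=H(\widehat{y}_i)-H(\widehat{y}_i\mid m)$. The assumption that, conditional on $m$, the predictive distribution is exactly $\mathrm{Cat}(f_1^m(x_i),\ldots,f_C^m(x_i))$ (this is where \textbf{Assumption~\ref{ass:deterministic}} enters: the categorical parameters are a deterministic function of $m$ and $x_i$) gives $H(\widehat{y}_i\mid m=m')=H(\widehat{y}_i^{m'})$. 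Averaging over $m$, which is uniform on $\{1,\ldots,M\}$ by \textbf{Assumption~\ref{ass:uniform}}, yields $H(\widehat{y}_i\mid m)=\mathbb{E}_m[H(\widehat{y}_i^m)]=\frac{1}{M}\sum_{m=1}^M H(\widehat{y}_i^m)$, establishing the first equality in the claim.

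For the second equality I would note that the marginal law of $\widehat{y}_i$ is exactly the mixture: $\Pr(\widehat{y}_i=c)=\mathbb{E}_m[f_c^m(x_i)]=\overline{f}_c(x_i)$, so $H(\widehat{y}_i)=H(\overline{f}(x_i))$. Plugging into the standard definition
\[
\mathrm{JSD}(P_1,\ldots,P_M)\;=\;H\!\left(\tfrac{1}{M}\sum_{m}P_m\right)-\tfrac{1}{M}\sum_m H(P_m)
\]
with $P_m$ the law of $\widehat{y}_i^m$ gives precisely $H(\widehat{y}_i)-\mathbb{E}_m[H(\widehat{y}_i^m)]$, completing the chain.

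I do not anticipate any real technical obstacle: the argument is purely definitional once the assumptions in the appendix pin down (i) that the class probabilities $f_c^m(x_i)$ are a deterministic function of $(m,x_i)$ so that $\widehat{y}_i\mid m$ genuinely has the declared categorical law, and (ii) that $m$ is uniform so that the arithmetic mean in the JSD matches $\mathbb{E}_m[\cdot]$. The one spot that warrants care is the injectivity step: one must verify that $\mathrm{MI}$ is invariant under injective (measurable) transformations of a discrete random variable, which is immediate from the data-processing inequality applied in both directions. Everything else is arithmetic on entropies.
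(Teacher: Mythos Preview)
Your proposal is correct and follows essentially the same route as the paper's proof: reduce $\mathrm{MI}(S(m);\widehat{y}_i)$ to $\mathrm{MI}(m;\widehat{y}_i)$ via injectivity of $S$, apply the decomposition $\mathrm{MI}(m;\widehat{y}_i)=H(\widehat{y}_i)-H(\widehat{y}_i\mid m)$, use Assumptions~\ref{ass:deterministic}--\ref{ass:uniform} to identify the conditional entropy with $\tfrac{1}{M}\sum_m H(\widehat{y}_i^m)$, and recognise the result as the generalised JSD of the mixture. The paper phrases the injectivity step via a Markov chain and recoverability of $m$ from $S(m)$, while you invoke the data-processing inequality in both directions; these are equivalent justifications.
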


We conclude that the sample $i$ conveying the greatest level of information for the prediction of $S(m)$ (e.g., model accuracy) is the one with the greatest $\operatorname{JSD}\left(\widehat{y}_i^1,\ldots,\widehat{y}_i^M\right)$. This generalised Jensen-Shannon divergence translates to the diversity of distributions \citep{1365067}. 
Based on the insight that model diversity matters for performance prediction, we also consider an alternative measure that measures the model diversity: predictive diversity score (PDS) \citep{rubinstein2024scalable}. It is more interpretable, as it is a continuous generalisation of the number of unique argmax category predictions among $M$ source models:
\begin{align}
\mathrm{PDS}\left(\widehat{y}_i^1,\ldots,\widehat{y}_i^M\right):=\frac{1}{C}\sum_c \max_m f_c^m(x_i).
    \label{eq:pds}
\end{align}
PDS is related to JSD through the enveloping inequalities below:

\begin{proposition} 
Denoting $\mathrm{PDS}_i:=\mathrm{PDS}\left(\widehat{y}_i^1,\ldots,\widehat{y}_i^M\right), \mathrm{JSD}_i:=\mathrm{JSD}\left(\widehat{y}_i^1,\ldots,\widehat{y}_i^M\right)$ for each sample $i$, we have
\[
\frac{2}{M^2\ln 2}\,(\mathrm{PDS}_i-1)^2
\;\le\;
\JSD_i\!
\;\le\;
\frac{M}{M-1}\log M \cdot (\mathrm{PDS}_i-1).
\]

See proof in Appendix~\ref{sec:sandwich_jsd_final}.
\end{proposition}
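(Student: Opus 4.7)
The plan is to prove the two inequalities separately. Throughout write $p_m(c) := f_c^m(x_i)$, $\bar p(c) := \bar f_c(x_i)$, and $s_c := \max_m p_m(c)$, so that $\mathrm{PDS}_i - 1 = \sum_c (s_c - \bar p(c)) \ge 0$.

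For the lower bound I would start from the standard identity $\JSD_i = \frac{1}{M}\sum_m \KL(p_m \| \bar p)$ and chain three familiar inequalities. Pinsker's inequality (in base 2) gives $\KL(p_m \| \bar p) \ge \frac{2}{\ln 2}\,\TV(p_m,\bar p)^2$. Cauchy--Schwarz then yields $\sum_m \TV(p_m,\bar p)^2 \ge \frac{1}{M}\bigl(\sum_m \TV(p_m,\bar p)\bigr)^2$. Writing $\TV(p_m,\bar p) = \sum_c (p_m(c)-\bar p(c))_+$ and keeping only the $m = \arg\max$ term in each coordinate gives the key link to PDS: $\sum_m \TV(p_m,\bar p) \ge \sum_c (s_c - \bar p(c)) = \mathrm{PDS}_i - 1$. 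Combining the three inequalities delivers $\JSD_i \ge \frac{2}{M^2 \ln 2}(\mathrm{PDS}_i - 1)^2$.

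For the upper bound I would decompose $\JSD_i = \sum_c \bigl[\phi(\bar p(c)) - \frac{1}{M}\sum_m \phi(p_m(c))\bigr]$ with $\phi(x) := -x\log x$ and prove a per-coordinate bound. Concavity of $\phi$ together with $\phi(0)=0$ gives the elementary chord bound $\phi(x) \ge (x/s)\phi(s)$ for every $x \in [0,s]$; averaging over $m$ produces $\frac{1}{M}\sum_m \phi(p_m(c)) \ge (\bar p(c)/s_c)\phi(s_c) = -\bar p(c)\log s_c$, so the per-coordinate Jensen gap is at most $\phi(\bar p(c)) + \bar p(c)\log s_c = \bar p(c)\log(s_c/\bar p(c))$. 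Applying the tangent inequality $\log r \le (r-1)/\ln 2$ at $r = s_c/\bar p(c)$ turns this into $\bar p(c)\log(s_c/\bar p(c)) \le (s_c-\bar p(c))/\ln 2$. Summing over $c$ and using $\frac{1}{\ln 2} \le \frac{M\log M}{M-1}$ for $M \ge 2$ (equivalent to $M\ln M \ge M-1$, which is the tangent-at-$x=1$ inequality for the convex function $x\ln x$) completes the argument.

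The main obstacle is spotting the correct per-coordinate lower bound on $\frac{1}{M}\sum_m \phi(p_m(c))$; without the chord inequality $\phi(x) \ge (x/s)\phi(s)$, one is tempted to try reverse-Pinsker-type relations that require bounded likelihood ratios and yield only the looser bound $\JSD \le \log M$. Once the chord bound is in place, the remainder is routine calculus. A minor bookkeeping remark: if $s_c = 0$ then every $p_m(c)$ vanishes and the per-coordinate inequality reduces to $0 \le 0$, so degenerate coordinates pose no difficulty.
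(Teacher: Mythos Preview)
Your lower bound is essentially the paper's argument: Pinsker, then Cauchy--Schwarz (the paper hides this step inside the passage from $\tfrac{1}{M}\sum_m \TV_m^2$ to $U^2$), then the coordinate-wise inequality $\sum_m \TV(p_m,\bar p)\ge \mathrm{PDS}_i-1$, which the paper states as the spread--envelope lower bound $U\ge \tfrac{1}{M}E$.

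Your upper bound takes a genuinely different route. The paper first proves a JSD--TV inequality $D_{\mathrm{KL}}(p_m\|\bar p)\le \tfrac{M}{M-1}\log M\cdot \TV(p_m,\bar p)$ by exploiting the mixture structure $\bar p=\tfrac{1}{M}p_m+\tfrac{M-1}{M}\zeta$ and bounding $-\log(\alpha+(1-\alpha)t)\le (1-t)\log M$, then combines with $U\le E$. You bypass TV entirely: the chord bound $\phi(x)\ge (x/s)\phi(s)$ plus the tangent inequality $\log r\le (r-1)/\ln 2$ give $\JSD_i\le \tfrac{1}{\ln 2}(\mathrm{PDS}_i-1)$ directly, which you then weaken to the stated constant via $M\ln M\ge M-1$. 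This is more elementary and in fact strictly sharper, since $\tfrac{1}{\ln 2}\approx 1.44$ while $\tfrac{M}{M-1}\log M\ge 2$ for all $M\ge 2$ and grows without bound. The paper's detour buys a stand-alone JSD--TV sandwich (their Proposition on JSD--TV bounds) that may be of independent interest; your argument buys a tighter, $M$-independent constant with less machinery. Both are correct.
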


In the experiments, we consider both JSD and PDS as criteria for sample selection.

\subsection{Performance prediction}
\label{subsubsec:prediction}

Once a subset of dataset samples $A$ is selected, we use the responses of the target model $f$ on $A$ to estimate the true performance.

\subsubsection{Prior prediction methods}

We first review existing approaches for estimating the true performance using predictions on anchor points ${A}= \{a_1, \ldots, a_K\}$.

\noindent \textit{Weighted sum} \cite{vivek2023anchor} estimates the true performance by directly computing the accuracy on the anchor set:
$\operatorname{WS}(f,A):=({1}/K)\sum_{k} w_k \, s^m_{k}$, where
$w_k$ is the number of original training samples $x_i$ assigned to the anchor $a_k$ in the \textit{Anchor-Corr} method.

\noindent \textit{p-IRT} \citep{tinybenchmarks}: makes adjustments to the vanilla accuracy on the anchor set by adding a correction term derived from the IRT in \textit{Anchor-IRT} in:
$
\operatorname{p-IRT}(f,A):=(1/K) \sum_{k\in A} s_k+1/(N-K)\sum_{k \notin A} p_{i}
$,
where $\hat{p}_{i}$ is the IRT estimation computed based on the parameters obtained in \textit{Anchor-IRT}.

\noindent \textit{gp-IRT} \citep{tinybenchmarks} is a mixture of the two approaches above:
$\operatorname{gp-IRT}(f, A) = \lambda \cdot \operatorname{WS}(f, A) + (1 - \lambda) \cdot \operatorname{p-IRT}(f, A)$ where $\lambda \in [0, 1]$.

\noindent \textit{ability-IRT} \cite{kipnismetabench} is a two-stage method that uses the anchor set $A$ as a diagnostic tool rather than just a miniature test. First, it uses a pre-calibrated IRT model to estimate a latent ``ability'' score, $\hat{\theta}^f$, from the target model's pattern of correct and incorrect responses on $A$. Second, a pre-trained regressor, $g$, predicts the final performance $S_{\mathcal{D}}^f$ using both the simple anchor set accuracy $\hat{S}^f_A$ and this more informative ability score $\hat{\theta}^f$ as input features. The final prediction is given by $S_{\mathcal{D}}^f = g(\hat{S}^f_A, \hat{\theta}^f)$, leveraging a deeper measure of the model's capability to improve the estimate.

\textbf{How \methodbf differs.} Previous prediction methods rely on scalar summaries of performance, such as the (weighted or corrected) accuracy on the anchor set. In contrast, our approach leverages a much richer signal: the \textbf{model signature}, defined as the concatenation of the model's raw outputs on the selected samples. By learning a direct mapping from the high-dimensional signature to the final performance, we bypass the complexities of psychometric modeling and demonstrate that a simpler, more direct approach can be more effective.

\subsubsection{\method prediction}
\label{subsec:prediction}

Given a smaller set of test dataset $\mathcal{D}_\text{\method}$, we estimate the performance of a model $f$ as closely as possible to the true full test performance $S^f_{\mathcal{D}}$. We deliberately opt for simple approaches here, in order to make a point that simple is best; we also compare against a rather complex prior work and show that our simple method outperforms it.
Our performance prediction framework is depicted in Figure \ref{fig:dataset-condensation}.

\textbf{Model signatures.}
We hypothesise that models with similar output patterns on $\mathcal{D}_\text{\method}$ will exhibit similar performance. To capture this pattern, we define a \textbf{model signature} as the concatenation of the model's outputs on $\mathcal{D}_\text{\method}$: $f(\mathcal{D}_\text{\method}):=\left[f(x_1),\ldots,f(x_L)\right]$.

Such a function signature may have high dimensionality, as it is the product of model output dimensionality (e.g., 1000 for ImageNet) and the number of selected samples $|\mathcal{D_{\text{DISCO}}}|$ (e.g., can go up to 50k for ImageNet validation set). To reduce the storage burden and improve generalizability, we consider applying a dimensionality reduction technique based on principal component analysis (PCA): $Q\circ{f}(\mathcal{D}_\text{\method})$.

\textbf{KNN prediction.}
Built on the hypothesis that the similarities in function signature imply performance similarity, we consider the kNN predictor based on a held-out set of models $\mathcal{F}$. Given a function $f$ to evaluate, we identify the K most similar models in $\mathcal{F}$ using the Euclidean distance between their signatures after dimensionality reduction.
We estimate $f$'s performance by averaging the performances of the K most similar models.

\textbf{Parametric mapping.}
We also consider a parametric prediction variant. A single parametric mapping $R$ is trained for the prediction of model performance. As the training set, we use $M$ model signatures $Q\circ{f}_1(\mathcal{D}_\text{\method}), \ldots Q\circ{f}_M(\mathcal{D}_\text{\method})$ for $\mathcal{F}$ as the training set for the regression problem of training a mapping $R(\cdot)$ to let $R\circ Q\circ{f}_m(\mathcal{D}_\text{\method})$ approximate $\widehat{S}^f_{\mathcal{D}}$. The predictor $R$ can be implemented using a neural network, linear regression, or a Random Forest, for example.

\section{Experiments}
\label{sec:experiments}

In this section, we introduce the evaluation protocol (§\ref{subsubsec:eval-protocol}) and the experimental setup (§\ref{subsubsec:language-setup}),
present the main results of \methodfull (\method) in language domain (§\ref{sec:language_main_results}),
analyse contributing factors (§\ref{subsubsec:language-factor-analysis}),
and demonstrate that the method is domain-agnostic and can also be successfully applied to the vision domain (§\ref{subsec:vision}).

\subsection{Evaluation protocol}

\label{subsubsec:eval-protocol}

To ensure a fair comparison, all methods follow an identical evaluation protocol, i.e., they use the same ingredients and perform the same sequence of steps during the training and testing stages.

\noindent\textbf{Training}. Following §~\ref{sec:solution}, select anchor datapoints and train the performance predictor. 

\noindent \underline{Input}: source models $\mathcal{F} = \{f^1, \ldots, f^M\}$, full test dataset $\mathcal{D} = (\mathcal{D}_x, \mathcal{D}_y)$ with questions $\mathcal{D}_x$ and ground truth answers $\mathcal{D}_y$, parameter ${K}$.

\noindent \underline{Output}: set of anchor datapoints $A_K$, predictor $R$.

\begin{enumerate}

    \item Evaluate source models $\mathcal{F}$ on $\mathcal{D}_x$ and obtain model outputs \\ $\mathcal{F}(\mathcal{D}_x) = \{f(x): x\in \mathcal{D}_x, f \in \mathcal{F}\}$.
    \item Calculate their full test performance (e.g., accuracy) $S^\mathcal{F}_\mathcal{D} = \{S^{f}_\mathcal{D}: f \in \mathcal{F}\}$ based on $\mathcal{F}(\mathcal{D}_x)$ and $\mathcal{D}_y$.
    \item Use $\mathcal{F}(\mathcal{D}_x)$ and optionally $\mathcal{D}_y$ to select a set $A_K \subseteq \mathcal{D}_x$ of $K$ anchor datapoints with respective selection method (e.g., PDS/JSD, IRT, etc.) explained in §~\ref{subsec:condensation}.
    \item Train a predictor $R$ (e.g., Random Forest, gp-IRT, ability-IRT, etc.) explained in §~\ref{subsubsec:prediction} to predict full test performance from model's outputs on anchor datapoints  \\ $f(A_K) = \{f(x): x \in A_K\}$, such that ${S}^f_D \approx \widehat{S}^f_D = R(f(A_K)), \forall f \in \mathcal{F}$.
\end{enumerate}
\noindent\textbf{Testing}.  Test the performance predictor that is trained as explained above.

\noindent \underline{Input}: target models $\mathcal{\tilde{F}} = \{\tilde{f}^1, \ldots, \tilde{f}^{M_{\text{target}}}\}$, set of anchor points $A_K$,  predictor $R$, ground truth performances of target models $S^{\mathcal{\tilde{F}}}_\mathcal{D} = \{S^{f}_\mathcal{D}: f \in \mathcal{\tilde{F}}\}$ computed the same way as $S^\mathcal{F}_\mathcal{D}$.

\noindent \underline{Output}: performance of the efficient evaluation method.

\begin{enumerate}
    
    \item Evaluate target models $\mathcal{\tilde{F}}$ on anchor points ${A_K}$ and obtain their outputs \\ $\mathcal{\tilde{F}}({A_K}) = \{f(x): x\in {A_K}, f \in \mathcal{\tilde{F}}\}$.
    \item Use predictor $R$ to estimate the ground truth performances of the target models \\$\widehat{S}^{\mathcal{\tilde{F}}}_\mathcal{D} = \{R(f(A_K)): f \in \mathcal{\tilde{F}}\}$.
    \item Calculate performance (e.g., MAE, Spearman rank correlation, etc.) of the efficient evaluation method by comparing $S^{\tilde{F}}_\mathcal{D}$ and $\widehat{S}^{\tilde{F}}_\mathcal{D}$.

\end{enumerate}

\subsection{Setup}
\label{subsubsec:language-setup}

We describe our experimental setup: the datasets, metrics, models, and model splits.

\noindent
\textbf{Datasets.}
We evaluate \method on four widely used language modeling benchmarks: MMLU \citep{mmlu}, HellaSwag \citep{hellaswag}, Winogrande \citep{sakaguchi2021winogrande}, and ARC \citep{clark2018think}. Details on the benchmarks can be seen in Appendix~\ref{sec:ext-language-setup}.

\noindent
\textbf{Metrics.}
We evaluate \method and baseline approaches using two complementary metrics. First, the Mean Absolute Error (\textit{MAE}) of the model accuracies, reported as percentage points (\%p), captures the absolute error of accuracy prediction. Second, to assess the consistency of the relative ordering of models, we report the Spearman rank correlation (\textit{Rank}) in model ranking between the true and estimated model performances.

\noindent
\textbf{Models.}
Building on the TinyBenchmarks framework \citep{tinybenchmarks}, we evaluate 424 large language models (LLMs) from Hugging Face's Open LLM Leaderboard \citep{open-llm-leaderboard-v2}. The models cover GPT- \citep{gpt2}, LLaMA- \citep{llama}, DeepSeek- \citep{deepseek}, and BERT-style \citep{bert} architectures, with model sizes ranging from 1.3 billion to 72 billion parameters.

\noindent
\textbf{Model split.}
\method is based on a meta-model approach where a predictor is constructed based on the model signatures of a pool of source models $\mathcal{F}$ and tested on a disjoint set of target models. This approach has traditionally been criticised for its dependency on the set of existing models: the predictor may fail to retain performance with unforeseen changes in future models. To address this concern, we introduce the \textit{chronological split}, where the source models $\mathcal{F}$ consist of models published before January 13, 2024, and the meta test set consists of models after the cutoff date. The train-test ratio is 9:1. 

\subsection{Main results}
\label{sec:language_main_results}

\setlength{\tabcolsep}{0.2em}
\begin{table}[t]
\centering
\small
\begin{tabular}{ccc|llllllll}
\toprule
\multicolumn{1}{c}{\textbf{Approach}}&\multicolumn{1}{c}{\textbf{Selection}} & \multicolumn{1}{c}{\textbf{Prediction}} & \multicolumn{2}{c}{\textbf{MMLU (14k)}}& \multicolumn{2}{c}{\textbf{HS (10k)}} &  \multicolumn{2}{c}{\textbf{WG (1.3k)}}& \multicolumn{2}{c}{\textbf{ARC (1.2k)}}\\
\cmidrule(lr){4-5}
\cmidrule(lr){6-7}
\cmidrule(lr){8-9}
\cmidrule(lr){10-11}
& §\ref{subsec:condensation}& §\ref{subsubsec:prediction} & {MAE}$\downarrow$  &Rank$\uparrow$& {MAE}$\downarrow$  &Rank$\uparrow$  & {MAE}$\downarrow$  & Rank$\uparrow$  & {MAE}$\downarrow$  &Rank$\uparrow$  \\
\midrule
Baseline&Random & Direct eval. & 3.45 &0.916 & 2.85 &0.839  & 3.60& 0.827& 2.61&0.898\\
\midrule
\multirow{3}{*}{tinyBenchmarks} & Random  & gp-IRT & 2.79 &0.922 & 1.96 &0.819  & 1.64& 0.928& 2.22&0.921\\
 & Anchor-IRT  & gp-IRT & 3.25 &0.922 & 2.19 &0.830  & 2.24& 0.850& 4.55&0.708\\
&Anchor-corr  & gp-IRT & 2.08 &0.927 & 1.27 &0.937  & 1.95& 0.918& 2.18&0.948\\
\midrule
 Metabench & Best for val.& ability-IRT & 2.08\textdagger & 0.904\textdagger & \textbf{0.80}\textdagger & 0.974\textdagger & 1.23\textdagger  & 0.947\textdagger & \textbf{1.14}\textdagger &\textbf{0.971}\textdagger \\
\midrule
\multirow{2}{*}{Model signature} &\multirow{2}{*}{Random}  & Sig. + kNN & 1.82 &0.912 & 1.49 &0.899  & 1.58& 0.920& 2.30&0.905\\
&   & Sig. + RF& 1.81 &0.933 & 1.36 &0.938  & 1.29& 0.926& 1.72&0.938\\
\midrule
\multirow{4}{*}{DISCO (ours)} & \multirow{2}{*}{High PDS}  & Sig. + kNN & 1.31 &0.972 & 1.32 &0.956  & 1.19& 0.951& 1.96&0.937\\
                              &                            & Sig. + RF& \textbf{1.07} &\textbf{0.987} & 1.01 &\textbf{0.984}  & \textbf{1.00}& 0.967& 1.47&\textbf{0.971}\\
\cmidrule{2-11}
 & \multirow{2}{*}{High JSD}  & Sig. + kNN & 1.14 &0.975 & 1.50 &0.944 & 1.26 &0.955 & 2.11 &0.939 \\
                              & & Sig. + RF & 1.30 &\textbf{0.987} & 0.86 &0.972 & 1.09 &\textbf{0.973} & 1.75 &0.938 \\
\bottomrule
\end{tabular}
\captionsetup{font=small}
\caption{\textbf{\method achieves state-of-the-art test-set compression by using model signatures combined with PDS for accurate performance prediction}. Compression of MMLU, HellaSwag (HS), Winogrande (WG), and ARC datasets by \method (ours), tinyBenchmarks, Metabench, and other baselines. For each dataset, we reduce the test set to 100 data points (except for Metabench, see below), achieving inference cost reduction of 99.3\% and 99.0\%, on MMLU and HS, respectively. Sig. + RF/kNN stands for model signature with Random Forest/kNN prediction (§~\ref{subsec:prediction}). Mean absolute error (MAE) is the \%p difference in accuracy, and Rank is the Spearman rank correlation between the true model ranking and the estimated model ranking. \\
\textdagger \, Results for Metabench are not directly comparable, as it requires more examples to converge: 150 datapoints for MMLU and ARC (+50\%), 450 for HS (+350\%), and 200 for WG (+100\%). Confidence intervals in App.~\ref{sec:stat_rigor}.}
\label{tab:language-main}
\vspace{-2em}
\end{table}

Table \ref{tab:language-main} shows the main results. Uniform random sampling, together with direct evaluation with the corresponding annotated labels, yields 3.45\%p MAE and .916 rank correlation at 100 samples. The approaches introduced in tinyBenchmarks \cite{tinybenchmarks} improve over this baseline, confirming their findings.

\begin{wrapfigure}{r}{.4\linewidth}
\vspace{-2em}
\centering
\small
\includegraphics[width=\linewidth]{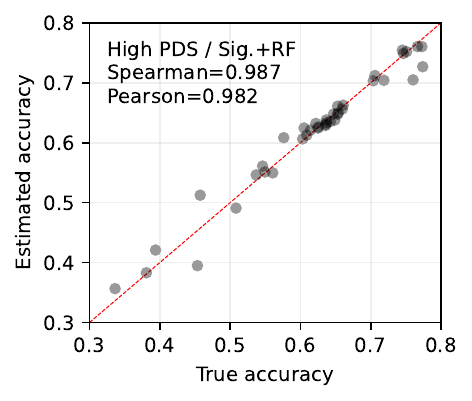}
\captionsetup{font=small}
\caption{\textbf{True and estimated performance on MMLU}. Scatter plot of the performances of 40 models.
}
\label{fig:language-scatter-plot}
\vspace{-2em}
\end{wrapfigure}

We measure the efficacy of \method in two steps: adopt a model-signature approach on top of uniform random sample selections first, and then consider sampling according to predictive diversity scoring (PDS). Even without PDS, on uniform random samples, model signatures are achieving 1.81\%p MAE and .933 rank correlation with Random Forest (RF), reaching the state-of-the-art performance with simple and practical ingredients. When PDS is further considered for sample selection, to diversify the model outputs, we achieve 1.07\%p MAE and .987 rank correlation (see Appendix~\ref{sec:rank_quality} for qualitative comparison of predicted ranks for DISCO vs direct evaluation), demonstrating a significant leap over the prior state of the art from tinyBenchmarks \cite{tinybenchmarks} from ICML 2024.

To provide an understanding of the distributional comparison of the true model performances and the estimated performances, we show a scatter plot in Figure~\ref{fig:language-scatter-plot}. As signified by the high Spearman's correlation coefficient of .987, the estimated performances closely follow the true performances.

\begin{figure}[h]
\vspace{-1em}
\centering
\small
    \includegraphics[width=\linewidth]{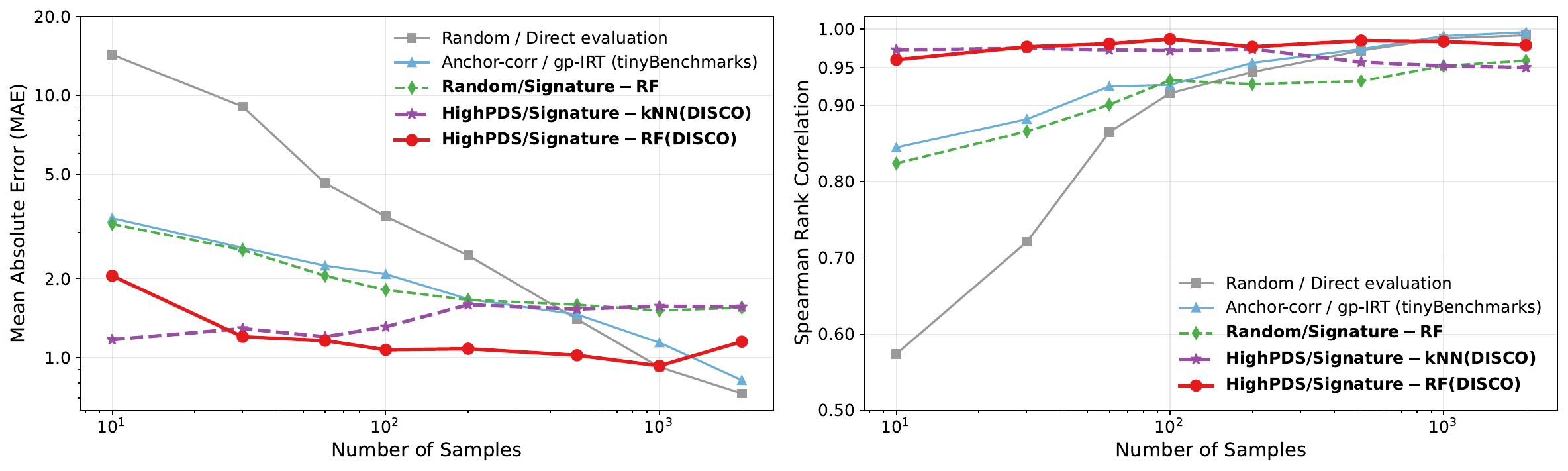}
    \captionsetup{font=small}
    \vspace{-1em}
    \caption{\textbf{MMLU performance estimation vs. compression rates}. Mean absolute error (MAE), measured in \%p difference in accuracy, and the Spearman rank correlation between the true model ranking and the estimated model ranking are shown. At 100 samples, the results are identical to Table \ref{tab:language-main}. \textbf{Main observations}: \method hits a better efficiency-precision trade-off across the entire range of compression rates. For an extreme compression rate, kNN is a better choice than random forest (RF).
    }
    \label{fig:language-num-samples}
    \vspace{-2em}
\end{figure}

Figure \ref{fig:language-num-samples} shows the performance against varying degrees of the test set reduction. We observe that the ranking of estimated evaluation methodologies does not change much across a wide range of degrees of reduction. In particular, our \method is consistently the best method across all ranges of the number of samples involved. For the extreme rates of compression, at 10 samples, the non-parametric performance predictor of kNN yields better performance than the parametric Random Forest, suggesting that non-parametric approaches may be more suitable at extreme compression.

\subsection{Factor analysis}
\label{subsubsec:language-factor-analysis}

\begin{wraptable}[25]{r}{.4
\textwidth}
\centering
\small
\vspace{-1.3em}
\includegraphics[width=\linewidth]{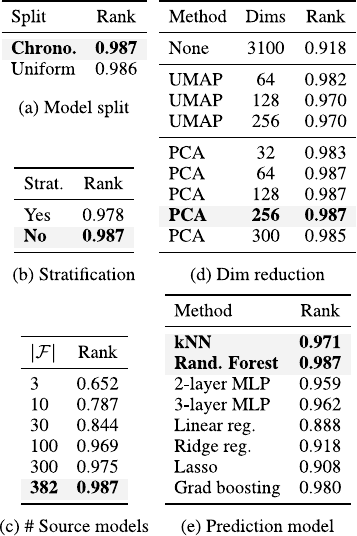}

\captionsetup{font=small}
\caption{\textbf{Factor analysis for \method on MMLU}. Highlighted in bold are the default design choices for \method. All comparisons are based on 100 selected samples.}
\label{tab:language_factor_analysis}
\end{wraptable}

We analyse the impact of several design choices involved in our \method on the MMLU dataset. See Table \ref{tab:language_factor_analysis} for an overview.

\textbf{Model split.} In a recent benchmark for efficient LLM evaluation \cite{zhang2025benchmark}, the authors observed that prediction performance drops sharply when test models outperform training models. We extend this idea by replacing performance-based splits with chronological splits, training on older models and testing on newer ones. This better reflects real-world usage, whereas performance-based splits create an artificial stress test.

For this purpose, we introduced the \textit{chronological split} in §\ref{subsubsec:language-setup}. We examine the impact of this model splitting on the result. We observe that our \method is robust to the choice of splitting strategy. Chronological splitting yields a rank correlation of .987, which is nearly identical to the .986 obtained with uniform splitting (Table~\ref{tab:language_factor_analysis}~(a)).

\textbf{Stratification.} We measure the efficacy of the stratification strategy in \citep{tinybenchmarks}, where equal numbers of anchor points are selected from each of 57 tasks in the MMLU dataset (Table~\ref{tab:language_factor_analysis}~(b)). We find that stratification (.978) is not effective when data points are sampled according to PDS (.987).

\textbf{Number of source models.} We analyse the sensitivity of \method to the number of source models $|\mathcal{F}|$ (Table~\ref{tab:language_factor_analysis}~(c)). With only 100 models (.969 rank correlation), it already outperforms TinyBenchmarks, which uses all 382 available source models (.927 in Table \ref{tab:language-main}). As the number of source models increases, rank correlation steadily improves, reaching a maximum of .987 for $|\mathcal{F}|=382$.

\textbf{Dimensionality reduction.} We compare PCA with different target dimensions to Uniform Manifold Approximation and Projection (UMAP) \citep{umap} for dimensionality reduction (Table~\ref{tab:language_factor_analysis}~(d)). We notice that dimensionality reduction helps reduce potential overfitting: without it (using all 3100 dimensions), the correlation is .918, while with PCA at 256 dimensions, it improves to .987. Overall, PCA outperforms UMAP and remains robust across a wide range of dimensions.

\textbf{Prediction model.}
We consider a wide range of prediction models (Table~\ref{tab:language_factor_analysis}~(e)). Random Forest achieves the highest rank correlation of .987, outperforming all other methods.

\subsection{Results for vision domain}
\label{subsec:vision}

\setlength{\tabcolsep}{0pt} %
\begin{wraptable}{r}{.50\textwidth}
\vspace{-1.1em}
\centering
\small
\begin{tabular}{@{}c@{\hspace{0.2em}}c@{\hspace{0.2em}}c@{\hspace{0.2em}}|@{\hspace{0.2em}}c@{\hspace{0.2em}}c@{}}
\toprule
\multicolumn{1}{c}{\textbf{Approach}} &
\multicolumn{1}{c}{\textbf{Selection}} &
\multicolumn{1}{c}{\textbf{Prediction}} &
\multicolumn{2}{c}{\textbf{IN val (50k)}} \\
& §\ref{subsec:condensation} & §\ref{subsubsec:prediction} & {MAE}$\downarrow$ & Rank$\uparrow$ \\
\midrule
Baseline & Random & Direct eval. & 3.03 & 0.652 \\
 \multirow{2}{*}{Lifelong Bench.}& Uniform& Weighted& \multirow{2}{*}{2.06}&\multirow{2}{*}{0.838}\\
 & correctness& sum& &\\
  \multirow{2}{*}{SSEPY}& Uniform& Weighted& \multirow{2}{*}{3.05}&\multirow{2}{*}{0.762}\\
 & confidence& sum& &\\
\midrule
\multirow{2}{*}{Model signature} & \multirow{2}{*}{Random} & Sig. + kNN & 1.72 & 0.808 \\
& & Sig. + RF & 0.86 & 0.944 \\
\midrule
\multirow{2}{*}{DISCO (ours)} & \multirow{2}{*}{High PDS} & Sig. + kNN & 1.68 & 0.819 \\
& & Sig. + RF & \textbf{0.63} & \textbf{0.969} \\
\bottomrule
\end{tabular}
\captionsetup{font=small}
\caption{
\textbf{\methodbf compression of ImageNet validation dataset.} We evaluate the generalisation of our \method to the computer vision domain. We reduce the test set to 100 anchor points. The main metrics are mean absolute error (MAE), measured in \%p difference in accuracy, and the Spearman rank correlation (Rank) between the true model ranking and the estimated model ranking. \textbf{Main observations}: (1) Same as for language experiments, model signature is an effective strategy for performance estimation. (2) Using PDS on top improves performance even more.
}
\label{tab:vision-main}
\vspace{-2em}
\end{wraptable}

In this section, we give a quick overview of the \method applied to the vision domain. For detailed results, see Appendix~\ref{sec:vision_results_full}.

\textbf{Setup.}
We use ImageNet-1k \citep{russakovsky2015imagenetlargescalevisual} with 1.28M images and 400 pretrained models from \texttt{timm} \citep{timm}, spanning convolutional \citep{cnn} and transformer \citep{vit} architectures (0.3M–300M parameters). Following the language domain, we adopt a \textit{chronological split} with a cutoff of 5 April 2023 (88:12 train–test). Performance is evaluated using mean absolute error (MAE) and Spearman's rank correlation.
The details on the baselines for the vision domain are in Appendix~\ref{sec:vision_baselines}.

\textbf{Results.}
Our \method approach significantly compresses the ImageNet validation set by reducing it to just 100 data points, achieving an inference cost reduction of 99.8\%.
\method with uniform random sampling and random forest prediction on model signatures achieves 0.86\%p MAE and .944 rank correlation, surpassing the baseline. 
Using a predictive diversity score (PDS) for data selection and a Random Forest for prediction, our method achieves a 0.63\%p MAE and .969 rank correlation, substantially outperforming the baseline (Table~\ref{tab:vision-main}).
The results demonstrate that \method is effective in both language and vision domains.

 DISCO ($.969 / 0.63$) outperforms Lifelong Bench.~\citep{lifelong} ($.838 / 2.06$) and SSEPY~\citep{fogliato2024framework} ($.762 / 3.05$) in both rank correlation and MAE. The conclusion from language experiments holds: instead of selecting anchor points with wide coverage of sample difficulty, one should focus on \textbf{selecting the points on which models typically disagree}.

\section{Conclusion}
\label{sec:conclusion}

Evaluating ML models is increasingly expensive due to larger models, datasets, and benchmarks. It is especially true for general-purpose LLMs requiring broad evaluation.

We propose \method, which selects a small informative subset of the evaluation data and estimates model performance from predictions on it. \method cuts evaluation costs by over 99\% with minimal error and consistently outperforms prior methods.

This enables practical use: efficient evaluation on limited compute, frequent performance tracking during training, and cheap end-user checks of deployed models.

\textbf{Limitations.}
The main limitation of \method is robustness to distribution shifts in the model population. Shifts can arise from new architectures, training methods, or objectives, which introduce patterns unseen during training and reduce estimator accuracy. Future work could address this with adaptive sample selection or periodic retraining on newer models
(see details in Appendix~\ref{sec:perf_gap}).

We also discuss unsuitable tasks for \method. The main constraint is that \method requires predictive probabilities for several predefined answer choices for each question. These answer choices correspond to the classes in Proposition~\ref{thm:mutual_jsd_main} in the original submission. That makes \method not suitable for open-ended generation tasks such as translation or summarisation. Applying \method to such tasks would first require defining sets of correct and incorrect outputs. We leave such experiments for future work.

\section*{Author Contributions}

Benjamin, Joon, and Alexander conceived the project. Alexander led the language experiments, Benjamin led the vision experiments. Joon and Martin helped design the experiments. Alexander, Martin, and Joon led the writing of the paper. Martin and Joon provided helpful feedback throughout the project.

\section*{Acknowledgments}

This work was supported by the Tübingen AI Center. 
AR thanks the International Max Planck Research School for Intelligent Systems (IMPRS-IS) for support. This research utilised compute resources at the Tübingen Machine Learning Cloud, DFG FKZ INST 37/1057-1 FUGG.

\bibliography{iclr2026_conference}
\bibliographystyle{iclr2026_conference}

\newpage

\appendix

\section*{Disclaimer for use of LLMs}
We primarily used LLMs in coding co-pilot applications to facilitate experimentation and help with plotting code for result presentation. LLMs were also used as writing tools to assist in refining the paper. However, the final version was carefully reviewed and finalized by the authors. No LLMs were used in ideation and experimental design.

\section{Extended experimental setup}
\label{sec:ext-language-setup}

\noindent
\textbf{Datasets.}
We evaluate \method on four widely used language modelling benchmarks: 
\begin{itemize}
    \item Massive Multitask Language Understanding (MMLU) \citep{mmlu} question-answering dataset that covers 57 tasks about world knowledge and problem-solving ability.
    \item HellaSwag \citep{hellaswag} dataset that focuses on commonsense natural language inference.
    \item Winogrande \citep{sakaguchi2021winogrande}: dataset of 273 expert-crafted pronoun resolution problems originally designed to be unsolvable for statistical models.
    \item AI2 Reasoning Challenge (ARC) \citep{clark2018think}: question-answering dataset that contains only natural, grade-school science questions (authored for human tests) and requires knowledge and reasoning. 
\end{itemize}

\section{Computational costs}
\label{sec:cost}

We report the space–time complexity for the main stages of DISCO, as well as the cost of direct evaluation of a target model. The numbers correspond to a single H100 GPU and are extrapolated from evaluations of five diverse 32B LLMs on MMLU (standard deviation across 5 runs).

\subsection{DISCO Pipeline Overview}

The DISCO pipeline consists of two stages: an \textbf{offline} stage (run once) and an \textbf{online} stage (run for each new target model).

\paragraph{Offline Stage}
\begin{itemize}
    \item Evaluate $M$ source models on the full test dataset ($M = 385$ in this experiment)
    \item Store source model outputs
    \item Select 100 anchor points that maximise PDS/JSD
    \item Concatenate outputs on anchor points to form model signatures
    \item Train a predictor to estimate model performance on the full test dataset from these signatures
\end{itemize}

\paragraph{Online Stage}
\begin{itemize}
    \item Evaluate one target model on the 100 anchor points
    \item Store target model outputs
    \item Concatenate to obtain the target model signature
    \item Run the predictor to estimate performance on the full test dataset
\end{itemize}

\noindent For every target model, the anchor points and predictor trained offline are reused.

\subsection{Cost Metrics}

\begin{table}[h!]
\centering
\setlength{\tabcolsep}{10pt}
\begin{tabular}{l c}
\toprule
DISCO: offline stage & $3284.05 \pm 592.90$ GPU-hours \\
DISCO: online stage & $0.07 \pm 0.01$ GPU-hours \\
Direct evaluation & $8.53 \pm 1.54$ GPU-hours \\
\bottomrule
\end{tabular}
\caption{
DISCO computation cost metrics (single H100 GPU, MMLU, 5 runs) compared to direct evaluation cost.
Computation savings are computed as the difference between the direct evaluation cost and online computation cost, i.e.,
$8.53 - 0.07 = 8.46$ GPU-hours (mean). This yields $8.46 \pm 1.54$ GPU-hours saved per evaluated model.
}
\label{tab:timing}
\end{table}

\begin{table}[h!]
\centering
\setlength{\tabcolsep}{10pt}
\begin{tabular}{l c}
\toprule
Source outputs (offline) & 86.54 MB \\
Source signatures (offline) & 400 KB \\
Target outputs (online) & 224.78 KB \\
Target signature (online) & 1 KB \\
\bottomrule
\end{tabular}
\caption{DISCO storage requirements (offline stage for 400 source models and online stage for one target model).}
\end{table}

\noindent The majority of the compute is required by the offline stage (3284 GPU-hours).

\subsection{Break-Even Analysis: How Many Evaluations Justify DISCO Setup?}

DISCO breaks even at \textbf{389 evaluations}. Since each DISCO evaluation saves $8.46$ GPU-hours per model (vs.\ $8.53$ GPU-hours direct evaluation, minus $0.07$ GPU-hours online DISCO cost), the break-even point is:
\[
389 = \frac{3284}{8.46}.
\]

In practice, hundreds of checkpoint evaluations naturally occur during model development. For example, a single OLMo-2-32B \citep{olmo20242olmo2furious} training run includes \textbf{753 checkpoints} on Hugging Face, already exceeding break-even.

In some cases, there is no need to evaluate source models at all if offline predictions are downloaded from platforms such as open-llm-leaderboard.

\subsection{Comparison to Alternative Approaches}
\label{sec:cost_comparison}

We briefly remind the pipelines of the compared methods:

\begin{itemize}
    \item \textbf{Selection}: select a set of anchor points, i.e., a subset of the full test dataset based on different signals (random, IRT, model disagreement, etc.).
    \item \textbf{Prediction}: estimate model performance on the full test dataset from outputs on anchor points.
\end{itemize}

That is why we use ``Selection" and ``Prediction" columns to explain the difference between methods. See §~\ref{subsec:condensation} for details on selection methods, and §~\ref{subsubsec:prediction} for details on prediction methods.

\begin{table}[h!]
\centering
\setlength{\tabcolsep}{10pt} %
\begin{tabular}{ll l c c}
\toprule
 \textbf{Method} &\textbf{Selection} & \textbf{Prediction} &
\textbf{Offline (GPU-h)} & \textbf{Online (GPU-s)} \\
\midrule
 Baseline &– (use all samples) & Direct eval& – & $30739 \pm 5514$ \\
 Baseline &Random & Direct eval & – & $218 \pm 39$ \\
 tinyBenchmarks &Random & gp-IRT & $3284 \pm 592$ & $219 \pm 39$ \\
 tinyBenchmarks &Anchor-corr & gp-IRT & $3284 \pm 592$ & $219 \pm 39$ \\
 tinyBenchmarks &Anchor-IRT & gp-IRT & $3284 \pm 592$ & $219 \pm 39$ \\
 DISCO &High-PDS & RF & $3284 \pm 592$ & $218 \pm 39$ \\
 DISCO &High-PDS & KNN & $3284 \pm 592$ & $218 \pm 39$ \\
\bottomrule
\end{tabular}
\caption{Comparison to alternative approaches: offline (GPU-h) and online (GPU-s) costs, and computation savings (GPU-h). DISCO dominates all other methods.}
\end{table}

\noindent The differences in online cost across methods are negligible (e.g., 219 vs.\ 218 GPU-seconds). Offline costs are equal up to rounding.
Efficient evaluation methods allow for saving $\frac{(30739 - 218)}{30739} \cdot 100\% = 99.3\%$ of evaluation cost in comparison to full evaluation.

\section{Qualitative Meaning of Rank Correlation Improvements}
\label{sec:rank_quality}

To justify the additional computation required for DISCO relative to direct evaluation, we illustrate how the increase in rank correlation from $91.6$ (direct evaluation) to $98.7$ (DISCO) in Table~\ref{tab:vision-main} translates into qualitative improvements in model ranking.

Figure~\ref{fig:ranks_scatter} includes scatter plots of true vs.\ predicted ranks (see Figure~\ref{fig:ranks_scatter}). The direct-evaluation predictor demonstrates noticeable spread around the diagonal, while DISCO’s predictions align almost perfectly with it, indicating substantially more reliable ranking.

\begin{figure}[h!]
    \centering
    \includegraphics[width=0.99\linewidth]{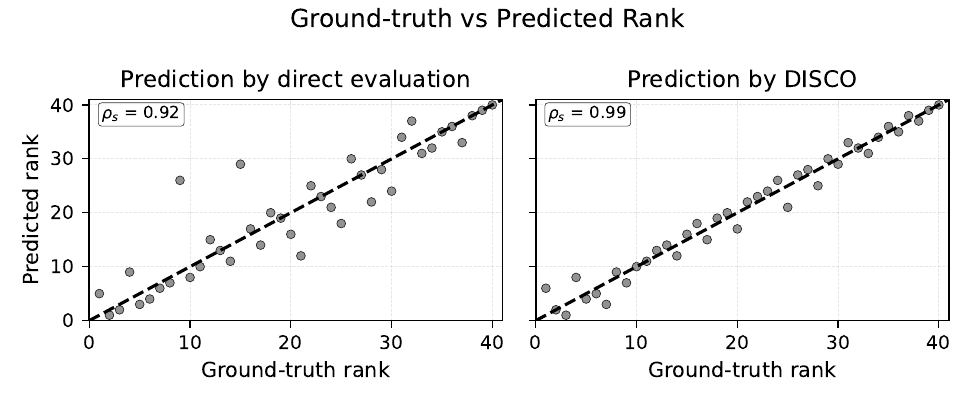}
    \caption{True vs.\ predicted rank comparison: direct evaluation vs.\ DISCO. $\rho_{s}$ means Spearman rank correlation.}
    \label{fig:ranks_scatter}
\end{figure}

\section{Additional Evaluation Results}
\label{sec:stat_rigor}

\subsection{Report Confidence Intervals}

We report the standard deviation for the previously reported results on MMLU from Table~\ref{tab:vision-main}, evaluated over one fixed chronological split and 5 independent runs.

We briefly remind the pipelines of the compared methods:

\begin{itemize}
    \item \textbf{Selection}: select a set of anchor points, i.e., a subset of the full test dataset based on different signals (random, IRT, model disagreement, etc.).
    \item \textbf{Prediction}: estimate model performance on the full test dataset from outputs on anchor points.
\end{itemize}

That is why we use ``Selection" and ``Prediction" columns to explain the difference between methods. See §~\ref{subsec:condensation} for details on selection methods, and §~\ref{subsubsec:prediction} for details on prediction methods. MAE is mean absolute error; Rank is Spearman's rank correlation.

\begin{table}[h!]
\centering
\setlength{\tabcolsep}{10pt}
\begin{tabular}{l l l c c}
\toprule
\textbf{Method} & \textbf{Selection} & \textbf{Prediction} & \textbf{MAE $\downarrow$} & \textbf{Rank $\uparrow$} \\
\midrule
Baseline & Random & Direct evaluation & $3.45 \pm 0.67$ & $91.6 \pm 2.6$ \\
tinyBenchmarks & Random & gp-IRT & $2.79 \pm 0.20$ & $92.2 \pm 2.3$ \\
tinyBenchmarks & Anchor-corr & gp-IRT & $2.08 \pm 0.20$ & $92.7 \pm 2.1$ \\
tinyBenchmarks & Anchor-IRT & gp-IRT & $3.25 \pm 0.49$ & $92.2 \pm 1.5$ \\
DISCO & High JSD & KNN & $1.14 \pm 0.00$ & $97.5 \pm 0.0$ \\
DISCO & High JSD & RF & $1.30 \pm 0.02$ & $98.7 \pm 0.1$ \\
DISCO & High PDS & KNN & $1.31 \pm 0.00$ & $97.2 \pm 0.0$ \\
DISCO & High PDS & RF & $1.07 \pm 0.04$ & $98.7 \pm 0.2$ \\
\bottomrule
\end{tabular}
\caption{MMLU, single chronological split: MAE (\%p) and Spearman rank (\%), mean $\pm$ std over runs. DISCO (PDS/JSD + RF/kNN) has lower variance than baselines.}
\end{table}

DISCO results are more stable than those of IRT and random sampling. This is because the only random component is Random Forest initialisation. In contrast, IRT is trained using variational inference, where stochastic gradient optimisation introduces additional randomness beyond model parameter initialisation.

\vspace{1em}

\subsection{Multiple Train/Test Split for Chronological Evaluation}

To expand the number of chronological splits, we bootstrap 5 different train/test chronological splits using the following protocol: for each run, we split models into 385 old and 40 new based on timestamps, then bootstrap 346 source and 36 test models from these sets. Details on the chronological split can be seen in §~\ref{subsubsec:language-setup}. Results for the new splits can be seen below.

\begin{table}[h!]
\centering
\setlength{\tabcolsep}{10pt}
\begin{tabular}{l l l c c}
\toprule
\textbf{Method} & \textbf{Selection} & \textbf{Prediction} & \textbf{MAE $\downarrow$} & \textbf{Rank $\uparrow$} \\
\midrule
Baseline & Random & Direct evaluation & $2.85 \pm 0.85$ & $93.3 \pm 3.0$ \\
\midrule
tinyBenchmarks & Random & gp-IRT & $2.42 \pm 0.43$ & $93.6 \pm 2.5$ \\
tinyBenchmarks & Anchor-corr & gp-IRT & $1.93 \pm 0.31$ & $92.9 \pm 3.0$ \\
tinyBenchmarks & Anchor-IRT & gp-IRT & $3.13 \pm 0.33$ & $90.2 \pm 4.5$ \\
DISCO & High PDS & KNN & $1.23 \pm 0.09$ & $97.0 \pm 1.1$ \\
DISCO & High PDS & RF & $1.25 \pm 0.14$ & $98.0 \pm 0.6$ \\
\bottomrule
\end{tabular}
\caption{MMLU, bootstrapped chronological train/test splits (5 runs): MAE (\%p) and Spearman rank correlation (\%), mean $\pm$ std. DISCO remains best across methods.}
\end{table}

Bootstrapped chronological splits slightly change the mean values (e.g., rank correlation from $98.6$ to $98.0$ and MAE from $1.06$ to $1.25$ for DISCO), but they do not alter the superiority of DISCO over other baselines.

\section{Sensitivity of DISCO to Model Calibration}

To evaluate the sensitivity of DISCO to model calibration, we compared the Expected Calibration Error (ECE) of target models with the Mean Absolute Error (MAE) between their true performance and DISCO-predicted performance on MMLU. We observe a Pearson correlation of $0.49$ between MAE and ECE, indicating that better-calibrated models lead to more accurate performance (lower MAE) predictions by DISCO.

This phenomenon is explained by the information relationship between confidence and correctness. For a perfectly calibrated model, the mapping between prediction confidence and correctness is deterministic and monotonic, resulting in high mutual information. In contrast, for a highly miscalibrated model (e.g., random guessing or uniformly confident but incorrect), prediction confidence becomes statistically independent of correctness, leading to low mutual information. Consequently, the more calibrated a model is, the more predictive its confidence patterns are of its true performance, and therefore the more informative its signature is for DISCO performance prediction.

The corresponding scatter plot is shown in Figure~\ref{fig:ece_mae}.

\begin{figure}[h!]
    \centering
    \includegraphics[width=0.5\linewidth]{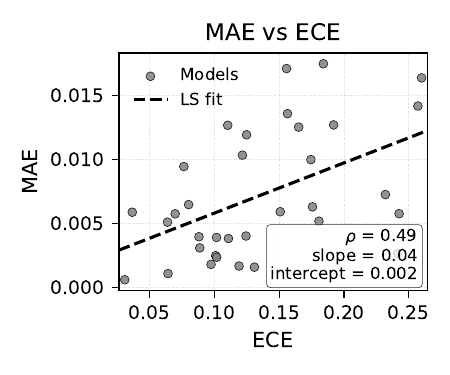}
    \caption{Correlation between DISCO prediction error (MAE) and Expected Calibration Error (ECE).}
    \label{fig:ece_mae}
\end{figure}

\vspace{1em}

During this analysis, we observed that two factors are confounded in calibration metrics: (1) overall confidence level, and (2) how well predictive uncertainty is reflected in confidence. To isolate the effect of overall confidence, we compared MAE with mean prediction confidence separately. We find a Pearson correlation of $-0.47$ between MAE and mean confidence, suggesting that overall model confidence is the dominant component of ECE that influences DISCO performance.

Figure~\ref{fig:conf_mae} presents the corresponding scatter plot.

\begin{figure}[h!]
    \centering
    \includegraphics[width=0.5\linewidth]{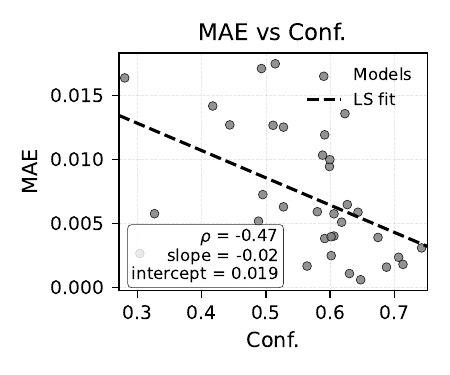}
    \caption{Correlation between DISCO prediction error (MAE) and mean model confidence.}
    \label{fig:conf_mae}
\end{figure}

\section{Performance Gap Experiments}
\label{sec:perf_gap}

In addition to source/target model splits discussed in §~\ref{subsubsec:language-factor-analysis}, we added experiments with a wider performance gap between source and target models to identify potential failure modes for DISCO. Inspired by \citep{zhang2025benchmark}, we introduce a performance split with varying gaps. We sort all models by their average performance and take the top-10\% or top-30\% (40 or 128 models) as target models, while using the bottom-90\% or bottom-50\% (385 or 213 models) as source models. The accuracy gap between the weakest target model and the strongest source model is 0.07\%p or 8.18\%p.

All model splits are summarised in Table~\ref{tab:splits}.

\begin{table}[h!]
\centering
\setlength{\tabcolsep}{2pt} %
\begin{tabular}{l c c c c}
\toprule
 & \textbf{IID} & \textbf{Chron.} & \textbf{Performance w/o gap} & \textbf{Perf. w/ gap} \\
\midrule
Prelim. model sorting & -- & By timestamp & By performance & By performance \\
Target models & Every 10th model & Top-10\% & Top-10\% & Top-30\% \\
Source models & Everything else & Bottom-90\% & Bottom-90\% & Bottom-50\% \\
\bottomrule
\end{tabular}
\caption{Source/target models splits.}
\label{tab:splits}
\end{table}

Table~\ref{tab:failure_mode} reports Spearman's rank correlation.

\begin{table}[h!]
\centering
\setlength{\tabcolsep}{5pt} %
\begin{tabular}{l c c c c}
\toprule
 & \textbf{IID} & \textbf{Chron.} & \textbf{Perf. w/o gap} & \textbf{Perf. w/ gap} \\
\midrule
Direct eval on random subset & $92.1 \pm 1.3$ & $91.6 \pm 2.6$ & $89.8 \pm 5.9$ & $87.4 \pm 5.7$ \\
DISCO & $98.6 \pm 0.3$ & $98.7 \pm 0.2$ & $98.1 \pm 0.2$ & $89.2 \pm 1.0$ \\
\midrule
Mean difference (DISCO -- direct eval) & +6.5 & +7.1 & +8.7 & +1.8 \\
\bottomrule
\end{tabular}
\caption{\method benefit vs direct evaluation on random subset across various source/target models splits.}
\label{tab:failure_mode}
\end{table}

For a source/target split with a performance gap, the difference between DISCO and direct evaluation is 1.8\%p, which is lower than for the IID split (6.5\%p), the chronological split (7.1\%p), or the performance split without a gap (8.7\%p). 

While this scenario can be seen as a failure mode for DISCO, we believe that such a source/target performance gap does not happen in practice. Instead, the accuracies of source and target models are often mixed. There are two main reasons for that. First, when practitioners develop new models, their early versions are often worse than the best previously evaluated models. Second, it takes time before new models consistently outperform older ones. Infrequent extra evaluations can allow practitioners to always keep the performance gap low. That makes the performance split with a gap less realistic than other splits.
\textbf{We thus conclude that DISCO does not break when the source and target model distributions differ, but only when the difference is unrealistically substantial.}

\vspace{0.5em}

\section{Mutual Information and Jensen-Shannon Divergence}
\label{sec:jsd_mi}

In this section, we show that Mutual Information is equivalent to JSD in our setting. We present the setup and assumptions, then prove the proposition.

\subsubsection{Setup.}
Let $m \sim \mathrm{Unif}\{1,\dots,M\}$ be the index of a uniformly chosen model.
Given $m$, the prediction on datapoint $i$ has categorical law $P_{\widehat{Y}_i\mid m}$. Define the ensemble mean distribution
\[
P_{\widehat{Y}_i}
\;=\; \mathbb{E}_{m \sim \operatorname{Unif}[M]}\!\left[P_{\widehat{Y}_i \mid m}\right]
\;=\; \frac{1}{M} \sum_{m=1}^M P_{\widehat{Y}_i \mid m}.
\]
Let $S(m)$ denote any statistic that is a deterministic function of $m$ computed on $\mathcal{D}$ (e.g.\ accuracy on $\mathcal{D}$).

\subsubsection{Assumptions.}

\begin{assumption}[Uniform prior]
\label{ass:uniform}
The model index is uniformly distributed: $m \sim \mathrm{Unif}\{1,\dots,M\}$.
\end{assumption}

We note that Assumption~\ref{ass:uniform} does not assume a uniform prior over the source models $f^1, \ldots, f^M$. It only assumes that the model index is drawn uniformly, $m \sim \mathrm{Unif}{1, \ldots, M}$. If the prior over models is non-uniform, we can replicate models proportionally to their sampling probabilities. In this case, the index distribution can be made uniform without changing the resulting model sampling outcomes.

\begin{assumption}[Deterministic predictions]
\label{ass:deterministic}
Conditional on $m$, each prediction $\widehat{Y}_i$ is fully determined by $m$ (or more generally, any residual randomness is independent across $i$ and independent of $m$).
\end{assumption}

\subsubsection{Proposition.}

\label{subsec:JSD_prop}

\begin{proposition}
\label{prop:JSD_prop}
Under Assumptions~\ref{ass:deterministic}--\ref{ass:uniform}, if $S(m)$ is injective, then
\[
\mathrm{MI}_{m,\widehat{Y}}\!\left(S(m) \,;\, \widehat{Y}_i\right)
\;=\; \mathcal{H}_{\widehat{Y}_i}\!\left(P_{\widehat{Y}_i}\right)
\;-
\mathbb{E}_{m \sim \operatorname{Unif}[M]}\!\left[ \mathcal{H}_{\widehat{Y}_i}\!\left(P_{\widehat{Y}_i \mid m}\right) \right]
\;=:\; \mathrm{JSD}\big(\{P_{\widehat{Y}_i \mid m}\}_{m=1}^M\big).
\]
\end{proposition}

\begin{proof}
By Assumption~\ref{ass:deterministic} and since $S(m)$ is a deterministic function of $m$, we have the Markov chain
\[
\widehat{Y}_i \;\longleftrightarrow\; m \;\longleftrightarrow\; S(m).
\]
If $S$ is injective, then $m$ is recoverable from $S(m)$, hence
\[
I\big(S(m); \widehat{Y}_i\big) = I\big(m; \widehat{Y}_i\big).
\]

By the definition of mutual information,
\[
I(m;\widehat{Y}_i) = \mathcal{H}_{\widehat{Y}_i}\!\left(P_{\widehat{Y}_i}\right) - \mathbb{E}_{m}\!\left[\mathcal{H}_{\widehat{Y}_i}\!\left(P_{\widehat{Y}_i \mid m}\right)\right].
\]

\emph{Marginal distribution (using Assumption~\ref{ass:uniform}):}
\[
P_{\widehat{Y}_i} = \sum_{m=1}^M \Pr(m)\, P_{\widehat{Y}_i \mid m} = \tfrac{1}{M} \sum_{m=1}^M P_{\widehat{Y}_i \mid m}.
\]
Thus $\mathcal{H}_{\widehat{Y}_i}(P_{\widehat{Y}_i})$ is the entropy of the mixture distribution.

\emph{Conditional entropy (using Assumption~\ref{ass:uniform}):}
\[
\mathbb{E}_{m}\big[\mathcal{H}_{\widehat{Y}_i}(P_{\widehat{Y}_i \mid m})\big] = \tfrac{1}{M}\sum_{m=1}^M \mathcal{H}_{\widehat{Y}_i}(P_{\widehat{Y}_i \mid m}).
\]

\emph{Combine:}
\[
I(m;\widehat{Y}_i) = \mathcal{H}_{\widehat{Y}_i}(P_{\widehat{Y}_i}) - \tfrac{1}{M}\sum_{m=1}^M \mathcal{H}_{\widehat{Y}_i}(P_{\widehat{Y}_i \mid m}) =: \mathrm{JSD}\big(\{P_{\widehat{Y}_i \mid m}\}_{m=1}^M\big).
\]
\end{proof}

We note that JSD and, as a consequence, \method predictor directly depend on the heterogeneity of source models. This heterogeneity is captured by how distinguishable the model-conditional predictive distributions $P_{\widehat{Y}_i \mid m}$ are from their mixture, as measured by their average KL divergence to the ensemble mean. The larger this KL divergence, the higher the JSD. According to Proposition~\ref{prop:JSD_prop}, a larger JSD is equivalent to higher mutual information between outputs and benchmark accuracy, which leads to better performance of the \method predictor. Conversely, if this KL divergence is small, the JSD is also small. In particular, if we have many copies of the same model, then JSD as well as mutual information become zero, leading to poor predictor performance.

\section{Bounds for Jensen-Shannon Divergence (JSD) via Predictive Diversity Score (PDS)}
\label{sec:jsd_bounds}

In this section, we show that JSD is bounded quadratically below and linearly above by PDS. We first relate JSD to total variation (§~\ref{subsec:sandwich_jsd}), then show total variation is monotone in PDS (§~\ref{subsec:sandwich_tv}), and then combine these results in §~\ref{sec:sandwich_jsd_final}.

\subsection{Bounds for JSD via Total Variation (TV)}
\label{subsec:sandwich_jsd}

We begin by showing that JSD is bounded quadratically below and linearly above by total variation. We first introduce the setup with required definitions (§~\ref{subsec:setup_jsd_sandwich}), then prove the proposition (§~\ref{subsec:proposition_jsd_sandwich}).

\subsubsection{Setup.}
\label{subsec:setup_jsd_sandwich}
Let $\{P_{\widehat{Y}_i \mid m}\}_{m=1}^M$ be distributions on $K$ classes. Define the mixture
\[
\bar P \;=\; \frac{1}{M}\sum_{m=1}^M P_{\widehat{Y}_i \mid m}.
\]

\begin{definition}[Jensen--Shannon divergence]
\[
\JSD\!\big(\{P_{\widehat{Y}_i \mid m}\}\big)
= \frac{1}{M}\sum_{m=1}^M D_{\mathrm{KL}}(P_{\widehat{Y}_i \mid m} \| \bar P)
= \mathcal{H}(\bar P) - \tfrac{1}{M}\sum_{m=1}^M \mathcal{H}(P_{\widehat{Y}_i \mid m}).
\]
\end{definition}

\begin{definition}[Total variation]
For distributions $P,Q$ on the same support,
\[
\TV(P,Q) = \tfrac{1}{2}\|P-Q\|_1.
\]
\end{definition}

\subsubsection{Proposition.}
\label{subsec:proposition_jsd_sandwich}
Now, we show that JSD is bounded quadratically below and linearly above by total variation.

\begin{proposition}[JSD--TV sandwich bounds]
\label{thm:jsd-tv}
For any $M\ge 2$ distributions $\{P_{\widehat{Y}_i \mid m}\}_{m=1}^M$ with mixture $\bar P$,
\[
\frac{2}{\ln 2}\cdot \frac{1}{M}\sum_{m=1}^M \TV(P_{\widehat{Y}_i \mid m},\bar P)^2
\;\le\;
\JSD\!\big(\{P_{\widehat{Y}_i \mid m}\}_{m=1}^M\big)
\;\le\;
\frac{M}{M-1}\,\log M \cdot \frac{1}{M}\sum_{m=1}^M \TV(P_{\widehat{Y}_i \mid m},\bar P).
\]
\end{proposition}

\begin{proof}
\textit{Lower bound.}
By Pinsker's inequality (e.g. Equation~1 in \citep{sason2015reverse}),
\[
D_{\mathrm{KL}}(P\|Q) \;\ge\; \frac{2}{\ln 2}\,\TV(P,Q)^2.
\]
Substituting $Q=\bar P$ and averaging over $m$ yields the lower bound.

\smallskip
\textit{Upper bound.}
Fix $m$. Write
\[
\bar P = \alpha P_{\widehat{Y}_i \mid m} + (1-\alpha)\zeta,
\quad \alpha=\tfrac{1}{M},\quad
\zeta=\tfrac{1}{M-1}\sum_{s\ne m} P_{\widehat{Y}_i \mid s}.
\]
Define $t(i) = \zeta(i)/P_{\widehat{Y}_i \mid m}(i)$ when $P_{\widehat{Y}_i \mid m}(i)>0$ (set $t(i)=+\infty$ if $P_{\widehat{Y}_i \mid m}(i)=0,\zeta(i)>0$). Then $\mathbb{E}_{P_{\widehat{Y}_i \mid m}}[t]=1$ and
\[
D_{\mathrm{KL}}(P_{\widehat{Y}_i \mid m}\|\bar P)
= \mathbb{E}_{P_{\widehat{Y}_i \mid m}}\!\left[-\log\big(\alpha+(1-\alpha)t\big)\right].
\]

Let $f(u)=-\log(\alpha+(1-\alpha)u)$, $u\ge 0$.
Then $f$ is convex, decreasing, with $f(1)=0$, $f(0)=\log(1/\alpha)=\log M$.
By convexity,
\[
f(u) \;\le\; (1-u) f(0) = (1-u)\log M.
\]

Thus,
\[
D_{\mathrm{KL}}(P_{\widehat{Y}_i \mid m}\|\bar P)
\le \log M \cdot \mathbb{E}_{P_{\widehat{Y}_i \mid m}}\!\big[(1-t)\big] \le \log M \cdot \mathbb{E}_{P_{\widehat{Y}_i \mid m}}\!\big[(1-t)_+\big].
\]

Now
\[
\mathbb{E}_{P_{\widehat{Y}_i \mid m}}[(1-t)_+]
= \sum_i P_{\widehat{Y}_i \mid m}(i)\max\{0,1-\zeta(i)/P_{\widehat{Y}_i \mid m}(i)\}
= \sum_i (P_{\widehat{Y}_i \mid m}(i)-\zeta(i))_+.
\]

By the balance-of-deviations identity (§~\ref{lem:balance}),
\[
\sum_i (P_{\widehat{Y}_i \mid m}(i)-\zeta(i))_+ = \TV(P_{\widehat{Y}_i \mid m},\zeta).
\]

Finally, since $\bar P = \alpha P_{\widehat{Y}_i \mid m}+(1-\alpha)\zeta$, one has
\[
\TV(P_{\widehat{Y}_i \mid m},\zeta) = \tfrac{M}{M-1}\,\TV(P_{\widehat{Y}_i \mid m},\bar P).
\]

Combining yields
\[
D_{\mathrm{KL}}(P_{\widehat{Y}_i \mid m}\|\bar P) \;\le\; \tfrac{M}{M-1}\,\log M \cdot \TV(P_{\widehat{Y}_i \mid m},\bar P).
\]
Averaging over $m$ gives the upper bound.
\end{proof}

\begin{remark}
The lower bound is quadratic in total variation, the upper bound linear. Thus, JSD interpolates between quadratic growth near equality and linear growth in worst-case separation.
\end{remark}

\subsection{Bounds for Total Variation via Predictive Diversity Score}
\label{subsec:sandwich_tv}

We next show that total variation is monotone in PDS. We introduce the setup with definitions and lemmas (§~\ref{subsec:sandwich_setup}), then prove the proposition (§~\ref{subsec:proposition_tv_sandwich}).

\subsubsection{Setup.}
\label{subsec:sandwich_setup}
Fix a class $c$. Let $X_m=P_{\widehat{Y}_i \mid m}(c)$ and $\mu=\bar P(c)$.
Define:

\begin{definition}[Envelope and spread, per class]
\[
E_c = \max_m X_m - \mu,
\qquad
U_c = \frac{1}{2M}\sum_{m=1}^M |X_m-\mu|.
\]
\end{definition}

\begin{definition}[Predictive Diversity Score]
\[
\operatorname{PDS}\!\big(\{P_{\widehat{Y}_i \mid m}\}\big)
= \sum^K_{c=1} \max_m P_{\widehat{Y}_i \mid m}(c).
\]
\end{definition}

\begin{lemma}[Balance-of-deviations identity]
\label{lem:balance}
For any $a_1,\dots,a_M$ with $\sum_m a_m=0$, writing $a_+=\max\{0,a\}$,
\[
\sum_{m=1}^M (a_m)_+ = \sum_{m=1}^M (a_m)_- = \tfrac{1}{2}\sum_{m=1}^M |a_m|.
\]
\end{lemma}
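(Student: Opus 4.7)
The plan is a direct calculation based on the standard decomposition of a real number into positive and negative parts. For each $a_m$, I introduce the negative part $(a_m)_- := \max\{0, -a_m\} = (-a_m)_+$, a convention the lemma leaves implicit, and use the pointwise identities $a_m = (a_m)_+ - (a_m)_-$ and $|a_m| = (a_m)_+ + (a_m)_-$, which both follow from a two-case check on the sign of $a_m$.

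Summing the first identity over $m$ and invoking the hypothesis $\sum_m a_m = 0$ immediately yields
\[
\sum_{m=1}^M (a_m)_+ \;=\; \sum_{m=1}^M (a_m)_-.
\]
Summing the second identity over $m$ gives
\[
\sum_{m=1}^M |a_m| \;=\; \sum_{m=1}^M (a_m)_+ + \sum_{m=1}^M (a_m)_-,
\]
and substituting the previous equality into this yields $\sum_m (a_m)_+ = \sum_m (a_m)_- = \tfrac{1}{2}\sum_m |a_m|$, as claimed.

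There is essentially no obstacle in this argument; the only thing to be careful about is fixing the convention for $(\cdot)_-$ explicitly at the outset. Conceptually, the identity is a finite-dimensional avatar of the Hahn--Jordan decomposition of a signed measure of total mass zero, which is exactly why it is useful in the upper-bound step of Proposition~\ref{thm:jsd-tv}: applied with $a_c = \Pr(\widehat{y}_i^m = c) - \zeta(c)$ (whose sum over $c$ vanishes because both are probability vectors), it rewrites $\sum_c (\Pr(\widehat{y}_i^m = c) - \zeta(c))_+$ as $\tfrac{1}{2}\sum_c |\Pr(\widehat{y}_i^m = c) - \zeta(c)| = \TV(\widehat{y}_i^m, \zeta)$.
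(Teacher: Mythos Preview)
Your proof is correct and follows essentially the same approach as the paper: decompose each $a_m$ as $(a_m)_+-(a_m)_-$ and $|a_m|=(a_m)_++(a_m)_-$, sum, and use $\sum_m a_m=0$. The additional remarks on the convention for $(\cdot)_-$ and on the application to $\TV(\widehat{y}_i^m,\zeta)$ are accurate and helpful but go beyond what the paper's own proof records.
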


\begin{proof}
Decompose $a=a_+-a_-$, $|a|=a_++a_-$. Summing and using $\sum_m a_m=0$ gives
\[
\sum_m a_{m,+}-\sum_m a_{m,-}=0
\quad\Rightarrow\quad
\sum_m a_{m,+}=\sum_m a_{m,-}.
\]
Then
\[
\sum_m |a_m|=\sum_m(a_{m,+}+a_{m,-})=2\sum_m a_{m,+}.
\]
\end{proof}

Applying Lemma~\ref{lem:balance} with $a_m=X_m-\mu$ yields
\[
U_c = \tfrac{1}{M}\sum_{m: X_m>\mu}(X_m-\mu).
\]

\subsubsection{Proposition.}
\label{subsec:proposition_tv_sandwich}

Now, we show that total variation is monotone in PDS.

\begin{proposition}[Spread--envelope bounds]
\label{thm:spread-envelope}
Use notation from Appendix~\ref{subsec:sandwich_setup}.
For each class $c$, if at most $z$ models satisfy $X_m>\mu$, then
\[
\frac{1}{M}E_c \;\le\; U_c \;\le\; \frac{z}{M}E_c.
\]
Aggregating over classes,
\[
\frac{1}{M}E \;\le\; U \;\le\; \frac{z}{M}E,
\]
where
\[
E=\sum_{c=1}^K E_c,
\qquad
U=\sum_{c=1}^K U_c = \tfrac{1}{M}\sum_{m=1}^M \TV(P_{\widehat{Y}_i \mid m},\bar P).
\]
\end{proposition}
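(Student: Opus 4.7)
The plan is to prove the per-class inequalities and then obtain the aggregated version by summation, since everything is linear in the class index. The key ingredient is already delivered in the setup: Lemma~\ref{lem:balance} applied with $a_m := X_m - \mu$ (which sum to zero because $\mu$ is the arithmetic mean of the $X_m$) gives the one-sided rewriting
\[
U_c \;=\; \frac{1}{M}\!\!\sum_{m:\, X_m > \mu}\!(X_m - \mu).
\]

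Both per-class bounds fall out by inspecting this sum. For the lower bound, I would keep only the argmax term: since $\max_m X_m - \mu = E_c \ge 0$, dropping all other nonnegative summands leaves $U_c \ge E_c/M$ (the degenerate case $E_c = 0$ forces every $X_m = \mu$, so both sides vanish). For the upper bound, each summand lies in $[0, E_c]$ and there are at most $z$ of them by hypothesis, so $U_c \le z E_c/M$. Summing these two per-class inequalities over $c$ delivers the aggregated bounds on $E = \sum_c E_c$ and $U = \sum_c U_c$. The announced identity $U = \frac{1}{M}\sum_m \TV(\widehat{y}_i^m,\widehat{y}_i)$ is then just an interchange of summations in $\sum_c \frac{1}{2M}\sum_m |X_m - \mu|$, recognising the total variation of $\widehat{y}_i^m$ against $\widehat{y}_i$ class-by-class.

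No real obstacle arises; the argument is essentially a two-line application of Lemma~\ref{lem:balance} together with the elementary ``keep one big term'' and ``bound each of few terms by the biggest'' observations. The only bookkeeping worth flagging is that $z$ is always at most $M-1$ (the deviations $a_m$ sum to zero, so they cannot all be strictly positive), and this is the fact that will eventually produce the $M/(M-1)$ prefactor when Proposition~\ref{thm:spread-envelope} is composed with Proposition~\ref{thm:jsd-tv}. Combined with the identification $E = \sum_c \max_m X_m - \sum_c \mu = \mathrm{PDS}_i - 1$, together with a Cauchy--Schwarz step $\frac{1}{M}\sum_m \TV_m^2 \ge (\frac{1}{M}\sum_m \TV_m)^2 = U^2$ to upgrade the linear lower bound on $U$ into a quadratic lower bound in $\mathrm{PDS}_i - 1$, the chain $\mathrm{PDS} \Rightarrow E \Rightarrow U \Rightarrow \TV \Rightarrow \JSD$ will yield the JSD--PDS sandwich stated at the top of this appendix.
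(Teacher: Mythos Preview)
Your proposal is correct and follows essentially the same route as the paper: rewrite $U_c$ via the balance-of-deviations identity as $\tfrac{1}{M}\sum_{m:X_m>\mu}(X_m-\mu)$, keep only the argmax term for the lower bound, bound each of at most $z$ positive terms by $E_c$ for the upper bound, and sum over classes. Your additional remarks on $z\le M-1$, the identity $E=\mathrm{PDS}_i-1$, and the Cauchy--Schwarz step are accurate previews of how this proposition feeds into the final JSD--PDS sandwich.
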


\begin{proof}
If $E_c=0$, then $X_m=\mu$ for all $m$ so $U_c=0$.
Otherwise, let $m^\star=\arg\max_m X_m$. Then
\[
U_c = \tfrac{1}{M}\sum_{m:X_m>\mu}(X_m-\mu) \;\ge\; \tfrac{1}{M}(X_{m^\star}-\mu) = \tfrac{1}{M}E_c.
\]
For the upper bound, each positive term is at most $E_c$, and there are at most $z$ such terms, hence
\[
U_c \le \tfrac{z}{M}E_c.
\]
Summing over classes gives the aggregated bound.
\end{proof}

\subsection{Final Sandwich Inequality}
\label{sec:sandwich_jsd_final}

Finally, we combine results from §~\ref{subsec:setup_jsd_sandwich} and §~\ref{subsec:sandwich_tv} to show that JSD is bounded quadratically below and linearly above by PDS.
\begin{proposition}[JSD–PDS sandwich]
Use notation from Proposition~\ref{thm:mutual_jsd_main}.
\[
\frac{2}{M^2\ln 2}\,(\mathrm{PDS\big(\{P_{\widehat{Y}_i \mid m}\}\big)}-1)^2
\;\le\;
\JSD\!\big(\{P_{\widehat{Y}_i \mid m}\}\big)
\;\le\;
\frac{M}{M-1}\log M \cdot (\mathrm{PDS\big(\{P_{\widehat{Y}_i \mid m}\}\big)}-1).
\]

\end{proposition}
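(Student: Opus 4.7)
The plan is to chain the two sandwich propositions already established in the appendix, namely the JSD--TV bounds (Proposition~\ref{thm:jsd-tv}) and the spread--envelope bounds (Proposition~\ref{thm:spread-envelope}), via a single bridging identity. The bridge is that $\sum_c \mu_c = 1$ because $\widehat{y}_i$ is a probability distribution, so the envelope aggregate satisfies
\[
E \;=\; \sum_{c=1}^K \bigl(\max_m \Pr(\widehat{y}_i^m = c) - \mu_c\bigr) \;=\; \mathrm{PDS}\bigl(\widehat{y}_i^1,\ldots,\widehat{y}_i^M\bigr) - 1,
\]
which directly converts the envelope quantity in Proposition~\ref{thm:spread-envelope} into the PDS shift used in the target statement.

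For the upper bound, I would start from $\JSD \le \frac{M}{M-1}\log M \cdot U$ (the upper half of Proposition~\ref{thm:jsd-tv}) and then apply the upper half of Proposition~\ref{thm:spread-envelope} with the trivial slack $z \le M$, yielding $U \le E$. Substituting $E = \mathrm{PDS} - 1$ closes the chain and gives exactly the advertised linear upper bound. Note that the per-class bound $U_c \le (z/M)\,E_c$ could in principle be tightened when few models exceed the mean, but the loosest choice $z = M$ is enough to recover the stated inequality.

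For the lower bound, I would start from $\JSD \ge \frac{2}{\ln 2}\cdot \frac{1}{M}\sum_m \TV(\widehat{y}_i^m,\widehat{y}_i)^2$ (the lower half of Proposition~\ref{thm:jsd-tv}). A single application of the power-mean / Cauchy--Schwarz inequality gives
\[
\frac{1}{M}\sum_{m=1}^M \TV(\widehat{y}_i^m,\widehat{y}_i)^2 \;\ge\; \left(\frac{1}{M}\sum_{m=1}^M \TV(\widehat{y}_i^m,\widehat{y}_i)\right)^{\!2} \;=\; U^2.
\]
Combining with the lower half $U \ge E/M$ of Proposition~\ref{thm:spread-envelope} and again using $E = \mathrm{PDS} - 1$ yields $\JSD \ge \frac{2}{\ln 2}\cdot (E/M)^2 = \frac{2}{M^2\ln 2}(\mathrm{PDS}-1)^2$, which is exactly the claimed quadratic lower bound.

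There is no real obstacle here beyond bookkeeping: every ingredient is already proved, and the only substantive move is recognizing $E = \mathrm{PDS} - 1$ and inserting Jensen's inequality in the lower bound to pass from an average of squares to the square of an average. The only subtlety to flag is the small mismatch between the main-text PDS (normalised by $1/C$) and the appendix PDS used here (unnormalised); as long as the appendix convention is adopted consistently, the chain goes through cleanly.
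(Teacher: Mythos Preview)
Your proposal is correct and follows essentially the same route as the paper: identify $E=\mathrm{PDS}-1$, then chain Proposition~\ref{thm:jsd-tv} with Proposition~\ref{thm:spread-envelope} using $z\le M$ for the upper bound and $U\ge E/M$ for the lower. You are in fact slightly more careful than the paper, which silently writes $\JSD \ge \tfrac{2}{\ln 2}\,U^2$ as if it came directly from Proposition~\ref{thm:jsd-tv}; your explicit power-mean/Cauchy--Schwarz step to pass from $\tfrac{1}{M}\sum_m \TV_m^2$ to $U^2$ is the missing link that the paper leaves implicit.
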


\begin{proof}
From Theorem~\ref{thm:jsd-tv},
\[
\JSD \;\ge\; \tfrac{2}{\ln 2}\,U^2, \qquad
\JSD \;\le\; \tfrac{M}{M-1}\,\log M \cdot U.
\]

Define
\[
E := \sum_{c=1}^K E_c,
\qquad
U := \sum_{c=1}^K U_c.
\]
By the definitions,
\[
U \;=\; \sum_{c=1}^K \frac{1}{2M}\sum_{m=1}^M |P_{\widehat{Y}_i \mid m}(c)-\bar P(c)|
:= \frac{1}{2M}\sum_{m=1}^M \|P_{\widehat{Y}_i \mid m}-\bar P\|_1
:= \frac{1}{M}\sum_{m=1}^M \TV(P_{\widehat{Y}_i \mid m},\bar P),
\]
where \(\bar P = P_{\widehat{Y}_i} = \tfrac{1}{M}\sum_{m=1}^M P_{\widehat{Y}_i \mid m}\), and
\[
E \;=\; \sum_{c=1}^K \Big(\max_m P_{\widehat{Y}_i \mid m}(c)-\bar P(c)\Big) = \mathrm{PDS}-1.
\]

From Proposition~\ref{thm:spread-envelope},
\[
\tfrac{1}{M}(\mathrm{PDS}-1) \le U \le \tfrac{z}{M}(\mathrm{PDS}-1).
\]
Combining and noticing that $1 \le z \le M$ yields the quadratic lower bound and linear upper bound in $(\mathrm{PDS}-1)$.
\end{proof}

\section{Implementation details}
\label{sec:impl}

The results of all experiments were averaged over five runs.
For the language tasks, we either used pre-computed LLM outputs downloaded from the open-llm-leaderboard \citep{myrzakhan2024openllmleaderboard} on Hugging Face or computed the outputs using lm-evaluation-harness \citep{eval-harness}. LLM evaluation was performed on a single NVIDIA H100 GPU, while parametric prediction methods (described in §~\ref{subsec:prediction}) were trained on a single NVIDIA GTX 2080 Ti GPU.
For the vision tasks, we used models from the \texttt{timm} library \citep{timm}, also evaluated on a single NVIDIA GTX 2080 Ti GPU. 

Training a parametric model takes less than one minute for both vision and language domains. Details on LLM evaluation time can be seen in Table~\ref{tab:timing}.

For the MMLU dataset, we observed that computing disagreement scores (as described in §~\ref{subsubsec:DISCO_selection}) using all available source models led to worse \method performance than using only a subset of them. This can be explained by the fact that including additional, highly similar or redundant models may dilute the effective heterogeneity of the ensemble, which is crucial for \method as discussed in §~\ref{subsec:JSD_prop}. Consequently, we select subsets of source models when computing disagreement scores. These subsets are obtained by randomly sampling $M$ models from the source models, where $M$ is treated as a hyperparameter and tuned jointly with other hyperparameters. For MMLU, the selected value was $M=100$. We chose random sampling here as it provides a simple and unbiased way to control ensemble size without introducing additional selection criteria.

Details for prediction methods:

\begin{itemize}
\item \textbf{kNN}: We used $k = 1$ in all experiments unless stated otherwise.
\item \textbf{Random Forest (RF)}: Implemented using scikit-learn \citep{scikit-learn} with default parameters.
\item \textbf{2-Layer MLP}: Trained for 200 epochs using the AdamW optimiser \citep{adamw} with default settings and a learning rate of 0.001. Hidden dimension: [128].
\item \textbf{3-Layer MLP}: Trained for 700 epochs using the AdamW optimiser \citep{adamw} with default settings and a learning rate of 0.001. Hidden dimensions: [128, 128].
\item \textbf{Linear Regression}: Implemented using scikit-learn \citep{scikit-learn} with default parameters.
\item \textbf{Ridge Regression}: Implemented using scikit-learn \citep{scikit-learn}, with default parameters and regularisation weight $\lambda = 10$.
\item \textbf{Lasso Regression}: Implemented using scikit-learn \citep{scikit-learn}, with default parameters and regularisation weight $\lambda = 0.0001$.
\item \textbf{Gradient Boosting (GB)}: Implemented using scikit-learn \citep{scikit-learn}, with default parameters and 200 base estimators.
\end{itemize}

Details for dimensionality reduction methods:
\begin{itemize}
\item \textbf{PCA}: We used the scikit-learn implementation \citep{scikit-learn} and varied the number of principal components as shown in Table~\ref{tab:language_factor_analysis}.
\item \textbf{UMAP}: We used the \texttt{umap-learn} library \citep{mcinnes2018umap-software} and varied the number of components as specified in Table~\ref{tab:language_factor_analysis}.
\end{itemize}

\section{Vision results}

\label{sec:vision_results_full}

We introduce the setup (§\ref{subsubsec:vision-setup}), describe baselines (§\ref{sec:vision_baselines}), and present results (§\ref{subsubsec:vision-main}).

\subsection{Setup}
\label{subsubsec:vision-setup}

\noindent
\textbf{Dataset.}
We use ImageNet-1k \citep{russakovsky2015imagenetlargescalevisual} with  1.28 million images.
\textbf{Models.}
We consider 400 models from \texttt{timm} \citep{timm} that are pretrained on ImageNet. The models cover convolutional \citep{cnn} and transformer \citep{vit} architectures. Model sizes range from 0.3M to 300M parameters.

\textbf{Model Split.}
As in the language domain (§~\ref{subsubsec:language-setup}), we use the \textit{chronological split}. The cutoff date is 5 April 2023. The train-test ratio of models is 88:12.

\textbf{Metrics.}
We use mean absolute error (MAE) and Spearman's rank correlation between the true and predicted performances.

\textbf{Evaluation.}
Evaluation protocol follows the one in §~\ref{subsubsec:eval-protocol}

\subsection{About baselines for vision domain}
\label{sec:vision_baselines}
Our work is not the first to propose efficient evaluation in the vision domain. The two closest methods are Lifelong Benchmark~\citep{lifelong} (NeurIPS~2024) and SSEPY~\citep{fogliato2024framework} (ECCV~2024). They propose efficient evaluation methods for visual models, using a similar two-stage framework for efficient evaluation in the language domain (see §~\ref{sec:solution} / Figure~\ref{fig:problem-overview} of our submission):

\begin{enumerate}
    \item Select ``important/representative'' anchor points.
    \item Estimate model performance based on model outputs on the anchor points.
\end{enumerate}

In \citep{lifelong}, mean correctness scores across source models are used to measure sample difficulty, and anchor points are selected by sampling every k-th datapoint after sorting them by difficulty (where $k = \frac{\#\text{all datapoints}}{\#\text{anchor points}}$). Final performance is predicted as a weighted sum of correctness scores predicted for each test datapoint. Predicted correctness scores are binary values indicating relative position (after sorting by difficulty) to the hardest anchor point the target model got right. In \citep{fogliato2024framework}, confidence scores of the target model are used to measure sample difficulty. Then samples are clustered by difficulty with K-Means, and anchor points are selected as the centroids of the clusters. Final performance is predicted as a weighted sum of anchor correctness scores. Weights for the weighted sum are determined based on the corresponding cluster sizes using the Horvitz-Thompson estimator.

The main message of our paper is that for selecting anchor points for efficient evaluation, it is better to select \textbf{diversity-inducing data points} (DISCO) than to \textbf{make a good coverage of sample difficulty} (prior work). In §~\ref{sec:language_main_results}, we have shown that our approach beats prior approaches in the language domain.

Likewise, in the vision domain, the existing approaches~\citep{lifelong,fogliato2024framework} focus on a good coverage of sample difficulty rather than on maximizing per-sample information by seeking diversity-inducing data points. We test empirically in Table~\ref{tab:vision-main} whether the same conclusion holds in the vision domain by comparing \method to~\citep{lifelong,fogliato2024framework}.

We computed the results for these baselines ourselves, as the papers do not contain results on ImageNet. For fair comparison, we use the same setup as described in §~\ref{subsubsec:vision-setup}.

\subsection{Main results}
\label{subsubsec:vision-main}

Table \ref{tab:vision-main} shows the main results. See §~\ref{subsec:vision} for their overview.

We evaluate the effectiveness of \method in two stages. First, we apply the model-signature approach using uniform random sampling. Then, we enhance it by selecting samples based on predictive diversity score (PDS).
The results follow a similar trend to the language domain. With uniform random sampling, model signatures combined with Random Forest achieve 0.86\%p MAE and a rank correlation of .944, significantly outperforming the naive baseline. Incorporating PDS further improves performance, reaching 0.63\%p MAE and a rank correlation of .969.

\begin{wrapfigure}[10]{r}{.4\linewidth}
\vspace{-2em}
\centering
\small
    \includegraphics[width=\linewidth]{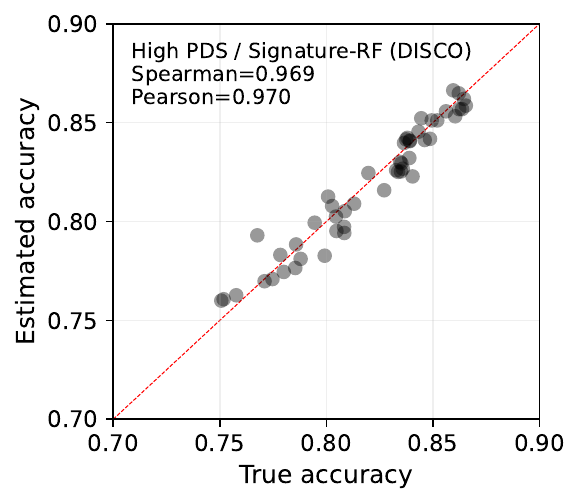}%
\vspace{-.5em}
    \caption{\textbf{True and estimated accuracy on ImageNet} for 50 models.
    }
    \label{fig:vision-scatter-plot}
\vspace{-2.5em}
\end{wrapfigure}

To illustrate how well the estimated performances align with the true values, we present a scatter plot in Figure~\ref{fig:vision-scatter-plot}. The high Pearson correlation coefficient of .970 indicates a strong agreement between the two.

Figure~\ref{fig:vision-num-samples} shows performance across varying levels of test set reduction. The relative ranking of evaluation methods remains largely stable, except for the kNN predictor, which degrades as the number of anchor points increases. Notably, \method consistently outperforms all baselines, even under extreme compression with as few as 10 samples.

\begin{figure}[t]
\centering
\small
    \includegraphics[width=1.0\linewidth]{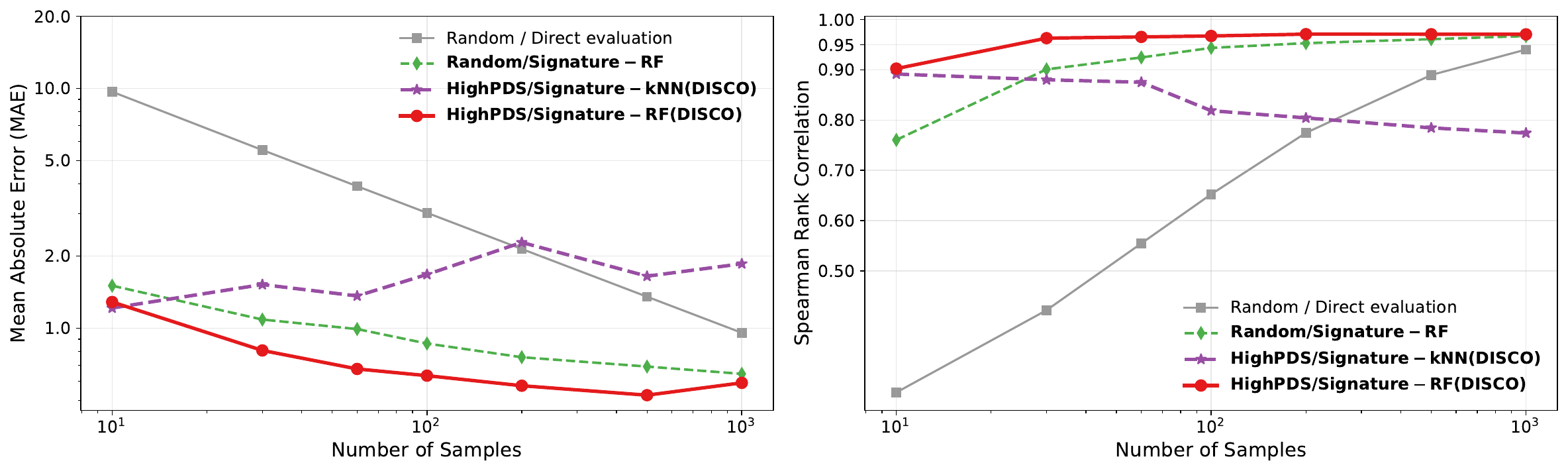}
    \caption{\textbf{ImageNet performance estimation vs. compression rates}. Mean absolute error (MAE), measured in \%p difference in accuracy, and the Spearman rank correlation between the true model ranking and the estimated model ranking are shown. At 100 samples, the results are identical to Table~\ref{tab:vision-main}. \textbf{Main observations}: Same as for language experiments \method hits a better efficiency-precision trade-off across the entire range of compression rates.
    }
    \label{fig:vision-num-samples}
\end{figure}

\end{document}